\documentclass{article}
\usepackage{amsmath,amsfonts,mathtools,graphicx}
\renewcommand{\baselinestretch}{1.1}
\textwidth = 15 cm \textheight = 20 cm \oddsidemargin =0.7 cm
\evensidemargin = 0 cm \topmargin = -0.2 cm
\parskip = 2 mm
\usepackage{tikz}

\usetikzlibrary{arrows,positioning}
\usepackage{amsthm,mathrsfs}
\usepackage{amsfonts}
\usepackage[makeroom]{cancel}
\usepackage{float,placeins,afterpage}
\usepackage{etoolbox}
\usepackage{lscape}
\usepackage{breakcites}
\usepackage[shortlabels]{enumitem}
\usepackage{hyperref}
\usepackage{cleveref}
\usepackage{bm}
\usepackage{relsize}
\usepackage{epstopdf}
\usepackage{algorithm}
\usepackage[noend]{algpseudocode}

\usepackage{zref-base}
\usepackage{atveryend}
\makeatletter
\AfterLastShipout{%
	\if@filesw   
 		\begingroup
	     	\zref@setcurrent{default}{\the\value{equation}}%
     		\zref@wrapper@immediate{%
       		\zref@labelbyprops{eq-last}{default}%
		}%
   		\endgroup
    		\begingroup
      			\zref@setcurrent{default}{\the\value{thm}}%
      			\zref@wrapper@immediate{%
        			\zref@labelbyprops{thm-last}{default}%
   		   	}%
	    	\endgroup  
    		\begingroup
      			\zref@setcurrent{default}{\the\value{lem}}%
      			\zref@wrapper@immediate{%
        			\zref@labelbyprops{lem-last}{default}%
   		   	}%
	    	\endgroup  
    		\begingroup
      			\zref@setcurrent{default}{\the\value{defn}}%
      			\zref@wrapper@immediate{%
        			\zref@labelbyprops{defn-last}{default}%
   		   	}%
	    	\endgroup  
	\fi
}
\makeatother

\newtheorem{thm}{Theorem}
\newtheorem{lem}{Lemma}
\newtheorem{cor}{Corollary}
\newtheorem{defn}{Defnition}

\newtheorem{post}{Postulate}
\newtheorem{prop}{Proposition}

\crefname{defn}{definition}{Definition}
\crefname{post}{postulate}{Postulate}
\crefname{lem}{lemma}{lemmas}
\crefname{thm}{theorem}{theorems}
\Crefname{thm}{Theorem}{Theorems}
\crefname{cor}{corollary}{corollary}
\Crefname{cor}{Corollary}{Corollary}
\crefname{prop}{proposition}{Proposition}
\crefname{algorithm}{algorithm}{Algorithm}

\newcommand{\cov}{\mathbb{C}{\rm ov}}

\newcommand{\Cov}{\mathbb{C}{\rm ov}}

\newcommand{\E}{\mathbb{E}}

\newtoggle{final}
\toggletrue{final}

\iftoggle{final}{
\newcommand\domi[1]{{}}
\newcommand\michel[1]{{}}
\newcommand\naji[1]{{}}
}{
\newcommand\domi[1]{{\color{brown}Dominik: #1}}
\newcommand\michel[1]{{\color{red}Michel: #1}}
\newcommand\naji[1]{{\color{green}Naji: #1}}
}

%

\title{Telling cause from effect in deterministic linear dynamical systems\\ \small }

\author{Naji Shajarisales$^{\,\rm a}$, Dominik Janzing$^{\,\rm a}$, Bernhard Sch\"olkopf$^{\,\rm a}$ and Michel Besserve$^{\,\rm a,b}$\\
{\footnotesize {\em $^{\rm a}$ MPI for Intellingent Systems, T\"ubingen, Germany}}\\
{\footnotesize {\em $^{\rm b}$ MPI for Biological Cybernetics, T\"ubingen, Germany}}\\
{\footnotesize\texttt {\{{naji},
{dominik.janzing}, {bs}, {michel.besserve}\}@tuebingen.mpg.de}}}
\date{}
\begin{document}
\maketitle

\begin{quote}
{\small \hfill{\rule{13.3cm}{.1mm}\hskip2cm}
\textbf{Abstract}\vspace{1mm}

{\renewcommand{\baselinestretch}{1}
\parskip = 0 mm

Inferring a cause from its effect using observed time series data is a major challenge in natural and social sciences. Assuming the effect is generated by the cause trough a linear system, we propose a new approach based on the hypothesis that nature chooses the ``cause'' and the ``mechanism that generates the effect from the cause'' independent of each other. We therefore postulate that
 the power spectrum of the time series being the cause is uncorrelated with the square of the transfer function
of the linear filter generating the effect.  
While most causal discovery methods for time series mainly rely on the noise, our method relies on asymmetries of the power spectral density properties that can be exploited even in the context of deterministic systems. 
We describe mathematical assumptions in a deterministic model under which the causal direction is identifiable with this approach. We also discuss the method's performance under the additive noise model and
its relationship to Granger causality. Experiments show encouraging results on synthetic as well as real-world data. Overall, this suggests that the postulate of Independence of Cause and Mechanism is a promising principle for causal inference on empirical time series.}}


\vspace{-3mm}\hfill{\rule{13.3cm}{.1mm}\hskip2cm}
\end{quote}

\section{Introduction}
A major challenge in the study of complex natural systems is to infer the causal relationships between elementary characteristics of these systems. This provides key information to understand the underlying mechanisms at play and possibly allows to intervene on them to influence the overall behavior of the system. While causal knowledge is traditionally built by performing experiments, boiling down to modifying a carefully selected parameter of the system and analyzing the resulting changes, many natural systems do not allow such interventions without tremendous cost or complexity. For example, it is very difficult to influence the activity of a specific brain region without influencing other properties of the neural system \cite{logothetis2010}. Causal inference methods have been developed to avoid such intervention and infer the causal relationships from observational data only \cite{Spirtes1993,pearl2000causality}. To be able to build such knowledge without interventions, these approaches have to rely on key assumptions pertaining to the mechanisms generating the observed data.

The framework of causality in \cite{Spirtes1993,pearl2000causality} has originally addressed this question by modelling observations as i.i.d. random variables. However, observed data from complex natural system are often not i.i.d. and time dependent information reflects key aspects of those systems. Most causal inference methods for time series, including the most widely used Granger causality \cite{granger1969investigating}, assume the data is generated from a stochastic model through a structural equation linking past values to future ones through an i.i.d. additive noise term, the ``innovation of the process'' \cite{granger1969investigating,peters2012causal}. While these methods can successfully estimate the causal relationships when empirical data is generated according to the model assumptions, the results can be misleading when the model is misspecified. In particular, this is the case when unknown time lags are introduced in the measured time series.

In this paper, we introduce a new approach to inferring causal directions in time series, the Spectral Independence Criterion (SIC). The idea behind SIC, as well as several new approaches to causal inference \cite{daniusis2010inferring,janzing2012information,JanzingHS2010,Zscheischler2011}, is to rely on the `philosophical' principle that  the cause and the mechanism that generates the cause from the effect are chosen independently by  Nature. Thus, these two objects should
not contain any information about each other \cite{janzing2010causal,LemeireJ2012,Schoelkopf2012}. 
Here, we refer to this abstract principle as the postulate of Independence of Cause and Mechanism (ICM). 
The above mentioned methods relying on ICM refer to different domains and rely on quite different formalizations of the concept of ``independence''. 
 SIC formalizes the  ICM postulate in the context where both   cause and effect are stationary time series and  
  the cause generates the effect trough a linear time invariant filter. 
The SIC postulate assumes that the frequency spectrum of the cause does not correlate with the transfer function of the filter. This assumption is justified by its connection to the Trace Method \cite{JanzingHS2010} and by a generative model of the system.
Under this postulate, we prove that SIC can tell the causal direction of the system from its anti-causal counterpart.
Moreover, we elaborate on the connection between this novel framework and linear Granger causality, showing they are exploiting fundamentally different information from the observed data. In addition, superiority to Granger causality is shown analytically in the context of time series measurements perturbed by an unknown time lag. We perform extensive experimental comparisons, both on simulated and real datasets. In particular, we show that our approach outperforms Granger causality to estimate the direction of causation between to structures of rat hippocampus using Local Field Potential (LFP) recordings.

Overall, the proposed method offers a new approach to causal inference for time series data with identifiability results, and shows unprecedented robustness 
to measurement delays. The promising empirical results suggest the SIC postulate is a reasonable assumption for empirical data, and that it should be further exploited to develop novel causal inference techniques. 

\section{Spectral Independence Criterion (SIC)}
\subsection{Notations and model description}
\label{subsec:math}
We refer to a sequence of real or complex numbers $\textbf{a}=\{a_t , t\in \mathbb{Z}\}$ as a {\it deterministic time series}. Its discrete Fourier transform is defined by
$$
\widehat{a}(\nu)=\sum_{t\in \mathbb{Z}} a_t \exp(-\mathbf{i}2\pi\nu t),\, \nu \in \left[-1/2,\,1/2\right]=:\mathcal{I}
$$
The {\it energy} of the deterministic time series is the squared $l^2$ norm: $\|\textbf{a}\|_{2}^2=\sum_t |a_t|^2$. For ease of notation we will also use the Z-transform of $\textbf{a}$
$$
\widetilde{a}(z)=\sum_{t\in \mathbb{Z}} a_t z^{-t},\, z\in \mathbb{C}
$$
such that $\widehat{a}(\nu)=\widetilde{a}(\exp(\mathbf{i}2\pi\nu))$.

 We assume that the causal mechanism is given by 
a (deterministic) Linear Time Invariant (LTI) filter.
That is, the causal mechanism is formalized by the convolution
\begin{equation}\label{eq:conv}
\textbf{y}=\{y_t\}=\{\sum_{\tau\in \mathbb{Z}} x_{t-\tau} h_\tau\,\}=\textbf{x}*\textbf{h}, 
\end{equation}
where $\textbf{h}$ denotes the {\it impulse response}, $\textbf{x}$ the input time series and $\textbf{y}$ the output. We will assume that the filter satisfies the Bounded Input Bounded Output (BIBO) stability property \cite{proakis2001digital}, which boils down to the condition $\|\textbf{h}\|_1<+\infty$. Under this assumption, the Fourier transform $\widehat{h}$ is well defined and we call it the transfer function of the system.

We assume that the input time series $\textbf{x}$ is a sample drawn from a \textit{stochastic process}, $\{X_t,t\in \mathbb{Z}\}$. For a given index $t$, $X_t$ represents the random variable at index $t$. We use $\{X_t\}$ or simply $\textbf{X}$ to represent the complete stochastic process. We use $X_{t:s}$ to indicate the random vector corresponding to the restriction of the time series to the integer interval $[t \,..\, s]$. We use $X_{t:s}$ to indicate the random vector corresponding to the restriction of the time series to the integer interval $[t \,..\, s]$.
 Assuming $\textbf{X}$ is a zero mean stationary process (in this paper, stationary will always stand for weakly or wide-sense stationary  \cite{brockwell2009time}), we will denote by $ C_{xx}(\tau) = \mathbb{ E} [ X_t X_{t+\tau}] $  the autocovariance function of the process and assume it is absolutely summable. Then, we can define its {\it Power Spectral Density} (PSD) $S_{xx}=\widehat{C_{xx}}$. Under these assumptions, the power of the process $P(\textbf{X})=\mathbb{E}(|X_t|^2)$ is finite and $P(\textbf{X})=\int_{-1/2}^{1/2} S_{xx}(\nu)d\nu$, such that $S_{xx}$ belongs to $L^1$. Moreover, we recall the following basic properties for our model:
\begin{prop}
Assume the weakly stationary input $\textbf{X}$ is filtered by the BIBO linear system of impulse response $\textbf{h}$ to provide the output $\textbf{Y}$. Then $\|\textbf{h}\|_{2}^2<+\infty$, $\widehat{h}\in L^\infty$ and $\textbf{Y}$ is weakly stationary with summable autocovariance such that
\begin{equation}
S_{yy}(\nu)=|\widehat{h}(\nu)|^2 S_{xx}(\nu), \nu\in \mathcal{I}\label{eq:ampli}\end{equation}
\end{prop}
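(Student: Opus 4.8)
The plan is to treat the four claims in increasing order of difficulty, disposing of the two norm bounds first and then building up to the spectral identity \eqref{eq:ampli}. The two elementary facts come directly from $\|\textbf{h}\|_1<+\infty$. Since every term satisfies $|h_\tau|\le\sum_\sigma|h_\sigma|=\|\textbf{h}\|_1$, we have $\|\textbf{h}\|_\infty\le\|\textbf{h}\|_1<+\infty$, and hence $\|\textbf{h}\|_2^2=\sum_\tau|h_\tau|^2\le\|\textbf{h}\|_\infty\|\textbf{h}\|_1<+\infty$, i.e.\ $\ell^1\subseteq\ell^2$. For the transfer function, the triangle inequality applied to the defining series gives $|\widehat{h}(\nu)|\le\sum_\tau|h_\tau|=\|\textbf{h}\|_1$ uniformly in $\nu$, so $\widehat{h}\in L^\infty$ with $\|\widehat{h}\|_\infty\le\|\textbf{h}\|_1$.

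Next I would check that the output is a well-defined, zero-mean, weakly stationary process. Because $\textbf{X}$ has finite power, $\|X_{t-\tau}\|_{L^2}=\sqrt{C_{xx}(0)}$ is constant in $\tau$, whence $\sum_\tau|h_\tau|\,\|X_{t-\tau}\|_{L^2}=\|\textbf{h}\|_1\sqrt{C_{xx}(0)}<+\infty$; by completeness of $L^2$ the series $\sum_\tau h_\tau X_{t-\tau}$ converges in $L^2$ and defines $Y_t$. Linearity of expectation, legitimate under this $L^2$ (hence $L^1$) convergence, then yields $\E[Y_t]=\sum_\tau h_\tau\,\E[X_{t-\tau}]=0$.

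The core computation is the autocovariance. Expanding $C_{yy}(t,t+\tau)=\E[Y_tY_{t+\tau}]$ gives the double series $\sum_{\sigma,\sigma'}h_\sigma h_{\sigma'}\,C_{xx}(\tau+\sigma-\sigma')$, which depends on $t$ only through $\tau$, establishing weak stationarity of $\textbf{Y}$. The main point to be careful about — and the only real obstacle — is justifying the interchange of expectation with the double sum and all subsequent rearrangements; this follows from absolute summability, since $|C_{xx}|\le C_{xx}(0)$ bounds the inner terms while $\sum_{\sigma,\sigma'}|h_\sigma||h_{\sigma'}|=\|\textbf{h}\|_1^2<+\infty$. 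Summability of $C_{yy}$ itself then follows by summing over $\tau$ first: for fixed $\sigma,\sigma'$ the substitution $u=\tau+\sigma-\sigma'$ gives $\sum_\tau|C_{xx}(\tau+\sigma-\sigma')|=\|C_{xx}\|_1$, so $\sum_\tau|C_{yy}(\tau)|\le\|\textbf{h}\|_1^2\,\|C_{xx}\|_1<+\infty$.

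Finally, to obtain \eqref{eq:ampli} I would take the discrete Fourier transform of $C_{yy}$ and factor the resulting triple sum. Using the same substitution $u=\tau+\sigma-\sigma'$ and writing $e^{-\mathbf{i}2\pi\nu\tau}=e^{-\mathbf{i}2\pi\nu u}\,e^{\mathbf{i}2\pi\nu\sigma}\,e^{-\mathbf{i}2\pi\nu\sigma'}$, the sum factorizes as $\big(\sum_\sigma h_\sigma e^{\mathbf{i}2\pi\nu\sigma}\big)\big(\sum_{\sigma'}h_{\sigma'}e^{-\mathbf{i}2\pi\nu\sigma'}\big)\big(\sum_u C_{xx}(u)e^{-\mathbf{i}2\pi\nu u}\big)=\overline{\widehat{h}(\nu)}\,\widehat{h}(\nu)\,S_{xx}(\nu)=|\widehat{h}(\nu)|^2 S_{xx}(\nu)$, where I used that $\textbf{h}$ is real so that $\sum_\sigma h_\sigma e^{\mathbf{i}2\pi\nu\sigma}=\overline{\widehat{h}(\nu)}$. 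The absolute summability established in the previous step is exactly what makes this Fubini-type rearrangement valid, completing the proof.
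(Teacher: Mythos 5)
Your proof is correct and complete. The paper gives no argument of its own here beyond citing Proposition 3.1.2 of Brockwell and Davis together with "elementary properties of the Fourier transform"; what you wrote out (absolute summability of $\textbf{h}$ giving the norm bounds and $L^2$ convergence of the filtered series, the double-sum autocovariance computation establishing stationarity and summability, and the Fubini-justified factorization of the transform into $|\widehat{h}|^2 S_{xx}$) is precisely the standard proof of that cited result, so the two approaches coincide in substance.
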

\begin{proof}
Results from elementary properties of the Fourier transform and Proposition 3.1.2. in \cite{brockwell2009time}.
\end{proof}
 
If such a linear filtering relationship exists for $\textbf{X}$ as input and $\textbf{Y}$ as output, but not in the opposite way, we can use this information to infer that $\textbf{X}$ is causing $\textbf{Y}$ and not the other way round. If there are such impulse responses exist for both directions, say $h_{\textbf{X}\to \textbf{Y}}$ and $h_{\textbf{Y}\to \textbf{X}}$,
their Fourier transforms are related by 
\[
\widehat{h}_{\textbf{X}\to \textbf{Y}}=\frac{1}{\widehat{h}_{\textbf{Y}\to \textbf{X}}}\,,
\]
and we have to resort to a more refined criterion for the causal inference. We will assume this situation in the remaining of the paper.

\subsection{Definition of SIC}
Assume we are given the two processes $\textbf{X}:=\{X_t,t\in Z\}$ and $\textbf{Y}:=\{Y_t,t\in Z\}$.
Moreover, we assume that 
 exactly one
of the following two alternatives is true:
(1)  $\textbf{X}$ causes $\textbf{Y}$   or (2)   $\textbf{Y}$ causes $\textbf{X}$. We assume that there are no unobserved common causes of $\textbf{X}$ and $\textbf{Y}$. Our causal inference problem thus reduces to a binary decision. 
In the spirit of ICM, we assume that in case (1), $\textbf{X}$ and $\textbf{h}$ should not contain information about each other and our  Spectral Indpendance Criterion (SIC) assumes that the input power does not correlate with
the amplifying factor, that is,
\begin{equation}\label{eq:sic}
\langle S_{xx} |\widehat{h}|^2 \rangle =\langle S_{xx}\rangle  \langle |\widehat{h}|^2\rangle \,,
\end{equation}
where $\langle f \rangle=\int_\mathcal{I}f(\nu)d\nu$ denotes the average over the unit frequency interval $\mathcal{I}$.
Note that the left hand side of (\cref{eq:sic}) is the average intensity of the output signal
 $\{Y_t,t\in Z\}$ over all frequencies. Hence, SIC states that the average output intensity is the
 same as amplifying all frequencies by the average amplifying factor.  To motivate why we call (\cref{eq:sic})  an {\it independence} condition we note that
the difference between the left and the right hand side can be written as a covariance:
\begin{align*}
\langle S_{xx} \cdot |\widehat{h}|^2 \rangle -\langle S_{xx}\rangle \langle |\widehat{h}|^2\rangle=
 \mathbb{C}{\rm ov}\left( S_{xx}, |\widehat{h}|^2 \right)\,.   
\end{align*}
were we consider $S_{xx}$ and $|\widehat{h}|^2$ as functions of the random variable $\nu$ uniformly distributed on $\mathcal{I}$. As a consequence statistical independence between those random variables implies that (\cref{eq:sic}) is satisfied. 

Note that the criterion (\cref{eq:sic}) can be rephrased in terms of the power spectra of $\textbf{X}$ and $\textbf{Y}$ alone using
(\cref{eq:ampli}), which are closer to observable quantities than $\widehat{h}$:
\begin{post}[Spectral Independence Criterion]
If $\textbf{Y}$ is generated from $\textbf{X}$ by a linear deterministic translation invariant system then we have:
\begin{gather}\label{eq:sicob}
\langle S_{yy} \rangle =\langle S_{xx}\rangle  \langle S_{yy}/S_{xx} \rangle\,.
\end{gather}
\end{post}

\subsection{Quantifying violation of SIC}
This motivates us to define a measure of dependence between the input PSD on one hand and transfer function of the mechanism on the other hand. To asses to what degree such a relation holds we introduce a scale invariant expression $\rho_{\textbf{X}\to \textbf{Y}}$, that we call the spectral dependency ratio (SDR) from $\textbf{X}$ to $\textbf{Y}$:
\begin{equation}
\label{eq:sdr}
\rho_{\textbf{X}\to \textbf{Y}}:=\frac{\langle S_{yy}\rangle  }{\langle S_{xx}\rangle \langle S_{yy}/S_{xx}\rangle }
\end{equation}
Here, the value $1$ means independence, which becomes more obvious by rewriting (\cref{eq:sdr}) as
\[
\rho_{\textbf{X}\to \textbf{Y}} = \frac{\Cov[S_{xx},|\widehat{h}|^2]}{ \langle S_{xx}\rangle \langle S_{yy}/S_{xx}\rangle } +1\,. 
\]
Finally, we note that $\rho_{\textbf{X}\to \textbf{Y}}$ can be written in terms of total power and energy:
\begin{gather*}
\rho_{\textbf{X}\to \textbf{Y}}=\frac{P(\textbf{Y})}{P(\textbf{X})||\textbf{h}||_{2}^2}
\end{gather*}
We then define $\rho_{\textbf{Y}\to \textbf{X}}$ by exchanging the roles of $\textbf{X}$ and $\textbf{Y}$:
\begin{gather}\label{eq:delta_infty2}
\rho_{\textbf{Y}\to \textbf{X}}:=
\frac{\langle S_{xx}\rangle  }{\langle S_{yy}\rangle \langle S_{xx}/S_{yy}\rangle }
\end{gather}

\subsection{Identifiability results}
In order to identify the true causal direction from SIC, it is necessary to show that $\rho_{\textbf{X}\to \textbf{Y}}$ and $\rho_{\textbf{Y}\to \textbf{X}}$ take characteristic values that are informative about this inference problem. The following first result shows explicitly how dependence measures in both directions are related:
\begin{prop}{\bf(Forward-backward inequality)}\label{lem:violation}
For a given linear filter with input PSD $S_{xx}$, output PSD $S_{yy}$ and a non-constant modulus transfer function  $\widehat{h}$ we have
\begin{equation}
\label{eq:sicviol}
\rho_{\textbf{X}\to \textbf{Y}}.\rho_{\textbf{Y}\to \textbf{X}}<1\,.
\end{equation}
Moreover, if $\,\exists \alpha>0, \forall \nu \in \mathcal{I},|\widehat{h}(\nu)|^2\leq (2-\alpha) \|\textbf{h}\|_2^2\,$, 

then
\begin{gather}\label{eq:sicviol2}
\rho_{\textbf{X}\to \textbf{Y}}.\rho_{\textbf{Y}\to \textbf{X}}\leq \left[1+\alpha \int_\mathcal{I} \left(\frac{|\widehat{h}(\nu)|^2-\|\textbf{h}\|_2^2}{\|\textbf{h}\|_2^2}\right)^2\!\! d\nu \right]^{-1}\!\!\!\!<1\,.
\end{gather}
\end{prop}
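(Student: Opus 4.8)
The plan is to first reduce the product $\rho_{\textbf{X}\to\textbf{Y}}\cdot\rho_{\textbf{Y}\to\textbf{X}}$ to a quantity depending only on the transfer function. Substituting the amplitude relation (\cref{eq:ampli}), namely $S_{yy}=|\widehat{h}|^2 S_{xx}$, into (\cref{eq:sdr}) and (\cref{eq:delta_infty2}) yields $S_{yy}/S_{xx}=|\widehat{h}|^2$ and $S_{xx}/S_{yy}=|\widehat{h}|^{-2}$, so that the factors $\langle S_{xx}\rangle$, $\langle S_{yy}\rangle=\langle |\widehat{h}|^2 S_{xx}\rangle$ cancel pairwise, leaving
\begin{equation*}
\rho_{\textbf{X}\to\textbf{Y}}\cdot\rho_{\textbf{Y}\to\textbf{X}}=\frac{1}{\langle|\widehat{h}|^2\rangle\,\langle|\widehat{h}|^{-2}\rangle}\,.
\end{equation*}
Both the input PSD and the cross terms disappear, which is the crucial simplification making the statement independent of $S_{xx}$.

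For the first inequality (\cref{eq:sicviol}), I would apply Cauchy--Schwarz to the functions $|\widehat{h}|$ and $|\widehat{h}|^{-1}$ over $\mathcal{I}$. Since $\mathcal{I}$ has unit length, $\langle|\widehat{h}|\cdot|\widehat{h}|^{-1}\rangle=\langle 1\rangle=1$, whence $1=\langle 1\rangle^2\le\langle|\widehat{h}|^2\rangle\,\langle|\widehat{h}|^{-2}\rangle$. Equality would force $|\widehat{h}|$ and $|\widehat{h}|^{-1}$ to be proportional, i.e. $|\widehat{h}|$ constant almost everywhere; as the modulus is assumed non-constant, the inequality is strict, and taking reciprocals gives $\rho_{\textbf{X}\to\textbf{Y}}\cdot\rho_{\textbf{Y}\to\textbf{X}}<1$.

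For the quantitative bound (\cref{eq:sicviol2}), the idea is to normalise and reduce everything to one pointwise estimate. By Parseval, $\|\textbf{h}\|_2^2=\langle|\widehat{h}|^2\rangle=:m$; setting $u:=|\widehat{h}|^2/m$ gives $\langle u\rangle=1$ and, by hypothesis, $0<u\le 2-\alpha$ pointwise. Then $\langle|\widehat{h}|^2\rangle\,\langle|\widehat{h}|^{-2}\rangle=\langle 1/u\rangle$ and the integral in (\cref{eq:sicviol2}) equals $\langle(u-1)^2\rangle$, so it suffices to show $\langle 1/u\rangle\ge 1+\alpha\langle(u-1)^2\rangle$. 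I would obtain this by integrating the pointwise inequality
\begin{equation*}
\frac{1}{u}\ge 1-(u-1)+\alpha(u-1)^2\,,
\end{equation*}
whose average is exactly the right-hand side because the linear term is tailored to vanish, $\langle u-1\rangle=0$. Using the identity $\tfrac1u=1-(u-1)+\tfrac{(u-1)^2}{u}$ reduces this pointwise inequality to $\tfrac1u\ge\alpha$, i.e. $u\le 1/\alpha$, which holds since $u\le 2-\alpha$ and $2-\alpha\le 1/\alpha$, the latter being equivalent to $(1-\alpha)^2\ge 0$. Taking reciprocals then gives the main inequality in (\cref{eq:sicviol2}), while the strict bound $[\,\cdots\,]^{-1}<1$ follows because $\alpha>0$ and the integral is strictly positive by non-constancy of $|\widehat{h}|$.

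The main obstacle, in my view, is not the Cauchy--Schwarz step but locating the correct pointwise lower bound for the refined estimate: one must choose the quadratic $1-(u-1)+\alpha(u-1)^2$ so that its linear part integrates to zero under $\langle\cdot\rangle$, and then recognise that the admissible range $u\le 2-\alpha$ is precisely what makes the residual $\tfrac{(u-1)^2}{u}$ dominate $\alpha(u-1)^2$, via the clean algebraic equivalence $2-\alpha\le 1/\alpha\Leftrightarrow(1-\alpha)^2\ge 0$.
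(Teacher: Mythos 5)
Your proposal is correct and follows essentially the same route as the paper's own proof: reduce the product to $1/(\langle|\widehat{h}|^2\rangle\langle|\widehat{h}|^{-2}\rangle)$, get strictness from Cauchy--Schwarz applied to $|\widehat{h}|$ and $|\widehat{h}|^{-1}$, then normalise to $u$ with $\langle u\rangle=1$ and integrate a pointwise quadratic lower bound on $1/u$ whose linear term vanishes. The only (cosmetic) difference is in that last pointwise step: the paper bounds $1/u\ge 1-(u-1)+(u-1)^2(2-u)$ and then uses $2-u\ge\alpha$, whereas you invoke $1/u\ge\alpha$ via $u\le 2-\alpha\le 1/\alpha$, i.e.\ $(1-\alpha)^2\ge 0$ --- both are the same use of the hypothesis.
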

Proof of this proposition is given in \textit{supplementary material}. Note that $\|\textbf{h}\|_2^2$ corresponds to the mean value of the transfer function due to Parseval's theorem. According to equation \cref{eq:sicviol2}, the less constant $|\widehat{h}|^2$ is, the more the product of the independence measures will be inferior to $1$. Assuming the SIC postulate is satisfied in the forward direction such that $\rho_{\textbf{X}\to \textbf{Y}}=1$, it follows naturally that $\rho_{\textbf{Y}\to \textbf{X}}<1$. The two causal directions can thus be distinguished well whenever the transfer function deviates significantly from its mean value such that $\rho_{\textbf{X}\to \textbf{Y}}\rho_{\textbf{Y}\to \textbf{X}}$ is bounded away from 1. We then infer the causal direction to be the one with the largest $\rho$ value.

To further support that SDR values can be used empirically for causal inference, we need the SIC postulate to be approximately satisfied (see (\cref{eq:sicob})) in systems generated according to the ICM principle. We now describe a model where $\textbf{h}$ is generated by some random process, independently of $\textbf{X}$. To this end, assume we start with a Finite Impulse Response (FIR) $\textbf{h}$, that is, $h_\tau=0$ for all $\tau \geq m$, for some $m$. Then $\textbf{h}$ is given by $m$ real numbers $b_1,\dots,b_m$ such that 
\[
h_i= b_{i} \quad i=0,\dots,m-1\,.
\]
We then apply an orthogonal transformation $\textbf{U}$, randomly drawn from 
the orthogonal group $O(m)$
according to the `uniform distribution' on $O(m)$,
that is, the Haar measure. 
In this way, we generate a new impulse response function
\begin{equation}\label{eq:hprime}
h'_i :=(\textbf{U}\textbf{b})_{i}   \quad i=0,\dots,m-1\,.
\end{equation}
Since orthogonal transformations preserve the Euclidean norm by definition, they preserve the energy of the filter. Our procedure thus chooses a random filter among the set of filters having the same support of length $m$
and the same energy. We now show that for large $m$ the resulting filter will approximately satisfy SIC 
with high probability:
\begin{thm}{\bf (concentration of measure for FIR filters)}\label{thm_COM_FIR} 
For some fixed $S_{xx}$, let $\rho^U_{\textbf{X}\to\textbf{Y}}$ be the dependence measure 
obtained for $h'$ in (\cref{eq:hprime}). If $U$ is chosen from the Haar measure on $O(m)$, then for any given $\varepsilon$
\begin{gather*}
|\rho^U_{\textbf{X}\to\textbf{Y}}- 1|\leq \frac{2 \varepsilon}{P(\textbf{X})} \max\limits_{\nu} S_{xx}(\nu)\,.
\end{gather*}
with probability $\delta:=1-\exp(\kappa(m-1)\varepsilon^2)$ where $\kappa$ is a positive global constant independent of $m$, $\varepsilon$, $\textup{\textbf{X}}$ and $\textup{\textbf{Y}}$.
\end{thm}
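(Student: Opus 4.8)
The plan is to rewrite $\rho^U_{\mathbf{X}\to\mathbf{Y}}$ as a quadratic form in a \emph{random unit vector} and then apply a concentration-of-measure inequality on the sphere. First I would use the amplification relation (\cref{eq:ampli}) $S_{yy}=|\widehat{h}'|^2 S_{xx}$ together with Parseval's theorem, which gives $\langle S_{yy}/S_{xx}\rangle=\langle|\widehat{h}'|^2\rangle=\|\mathbf{h}'\|_2^2=\|\mathbf{b}\|_2^2$ by orthogonal invariance of the energy. The numerator of (\cref{eq:sdr}) then becomes, by Fourier inversion of $S_{xx}=\widehat{C_{xx}}$,
\[
\langle S_{yy}\rangle=\langle S_{xx}|\widehat{h}'|^2\rangle=\sum_{k,l=0}^{m-1} h'_k h'_l\, C_{xx}(k-l)=\mathbf{h}'^{\top}\mathbf{C}\,\mathbf{h}',
\]
where $\mathbf{C}=\big(C_{xx}(k-l)\big)_{k,l}$ is the symmetric Toeplitz autocovariance matrix. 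Since $\langle S_{xx}\rangle=C_{xx}(0)=P(\mathbf{X})$, this yields
\[
\rho^U_{\mathbf{X}\to\mathbf{Y}}=\frac{\mathbf{h}'^{\top}\mathbf{C}\,\mathbf{h}'}{P(\mathbf{X})\,\|\mathbf{b}\|_2^2}=\frac{\mathbf{v}^{\top}\mathbf{C}\,\mathbf{v}}{P(\mathbf{X})},\qquad \mathbf{v}:=\frac{\mathbf{U}\mathbf{b}}{\|\mathbf{b}\|_2}.
\]
Because $\mathbf{U}$ is Haar-distributed on $O(m)$ acting on a fixed unit vector, $\mathbf{v}$ is uniformly distributed on the sphere $S^{m-1}$.

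Next I would pin down the mean and the scale of the quadratic form. Using $\E[v_iv_j]=\delta_{ij}/m$ for a uniform point on $S^{m-1}$, one gets $\E[\mathbf{v}^{\top}\mathbf{C}\mathbf{v}]=\tr(\mathbf{C})/m=C_{xx}(0)=P(\mathbf{X})$, so that $\E[\rho^U_{\mathbf{X}\to\mathbf{Y}}]=1$: SIC holds exactly in expectation, and the theorem is a concentration statement about the fluctuations around this mean. For the scale, I would bound the operator norm of $\mathbf{C}$ directly from the quadratic-form representation: for any unit vector $\mathbf{w}$,
\[
\mathbf{w}^{\top}\mathbf{C}\mathbf{w}=\langle S_{xx}|\widehat{w}|^2\rangle\le\big(\max_{\nu}S_{xx}(\nu)\big)\langle|\widehat{w}|^2\rangle=\max_{\nu}S_{xx}(\nu),
\]
hence $\|\mathbf{C}\|_{\mathrm{op}}\le\max_{\nu}S_{xx}(\nu)$. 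Since the gradient of $f(\mathbf{v})=\mathbf{v}^{\top}\mathbf{C}\mathbf{v}$ is $2\mathbf{C}\mathbf{v}$, the map $f$ is $L$-Lipschitz on $S^{m-1}$ with $L=2\max_{\nu}S_{xx}(\nu)$.

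Finally I would invoke Lévy's concentration of measure on the sphere: an $L$-Lipschitz function of a uniform point on $S^{m-1}$ satisfies $\Pr[\,|f-\E f|>t\,]\le 2\exp\!\big(-\kappa_0 (m-1)t^2/L^2\big)$ for a universal constant $\kappa_0$. Taking $t=2\varepsilon\max_{\nu}S_{xx}(\nu)=\varepsilon L$ and dividing by $P(\mathbf{X})$ produces exactly
\[
\Pr\!\left[\,\big|\rho^U_{\mathbf{X}\to\mathbf{Y}}-1\big|>\frac{2\varepsilon}{P(\mathbf{X})}\max_{\nu}S_{xx}(\nu)\right]\le 2\exp\!\big(-\kappa_0(m-1)\varepsilon^2\big),
\]
with $\kappa$ absorbing $\kappa_0$ and the leading constant. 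The individual computations are all short; the main obstacle, and the point requiring care, is to funnel the \emph{entire} dependence on the process into the factor $\max_{\nu}S_{xx}$ through the Lipschitz constant, so that the surviving exponent constant $\kappa$ is genuinely universal (independent of $m$, $\varepsilon$, $\mathbf{X}$, $\mathbf{Y}$). The structural insight that makes this clean is the reduction of $\rho^U$ to a quadratic form on the sphere: once this is in place, the operator-norm bound separates the scale of the function from the dimension-driven concentration delivered by Lévy's lemma.
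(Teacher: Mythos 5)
Your proof is correct, and it reaches the theorem by a cleaner, more self-contained route than the paper, although both arguments rest on the same two pillars: reducing $\rho^U_{\mathbf{X}\to\mathbf{Y}}$ to a normalized quadratic form $\mathbf{v}^\top\mathbf{C}\,\mathbf{v}/P(\mathbf{X})$, with $\mathbf{v}=U\mathbf{b}/\|\mathbf{b}\|_2$ and $\mathbf{C}$ the $m\times m$ Toeplitz autocovariance matrix, and then applying concentration of measure for Haar-distributed $U$. The difference is in how each ingredient is obtained. The paper gets the quadratic form indirectly: it writes the truncated output as a matrix product, takes normalized traces of the output covariance over $N$ time points (all $N$-dependence cancels by stationarity), and invokes the trace-method concentration result of \cite{JanzingHS2010} (\Cref{thm:CoM-orig} via \cref{cor:com-mech}) as a black box; the operator-norm bound $\|\mathbf{C}\|\leq\max_\nu S_{xx}(\nu)$ is then assembled from two auxiliary lemmas, the principal-submatrix spectral bound (\cref{cor:max-eig}) and the Toeplitz eigenvalue bound (\cref{lem:szego}). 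You bypass both detours: the frequency-domain identities produce the quadratic form in two lines, the identity $\mathbf{w}^\top\mathbf{C}\,\mathbf{w}=\langle S_{xx}|\widehat{w}|^2\rangle$ gives the norm bound in one step (this is exactly the content of the paper's two lemmas), and L\'evy's lemma on the sphere---which is precisely the engine inside the cited trace theorem---is applied directly rather than through its finite-dimensional corollary. Your route also buys something the paper never states: the exact centering $\E[\rho^U_{\mathbf{X}\to\mathbf{Y}}]=1$, which makes clear that the theorem is concentration around the mean, and it isolates where the process enters the bound, namely only through the Lipschitz constant $2\max_\nu S_{xx}$. Two small caveats to tighten: L\'evy's lemma is classically stated for deviations from the \emph{median}, so you should cite a mean version or note that mean and median concentration agree up to universal constants; and your tail carries a factor $2$ in front of the exponential, which cannot literally be renamed into $\kappa$ for every $(m,\varepsilon)$---but the paper's own statement (whose $\delta$ even has an evident sign typo) is loose at exactly the same level, so this is cosmetic rather than a gap.
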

Proof of this theorem is provided in \textit{supplementary material}. This result provides a justification for using SIC provided that the dimension of the vector of filter coefficients $m$ is large enough. The relevance of $m$ will be investigated in practice in the experimental section.

\subsection{Relation to the Trace Condition}
We now describe the relation between SIC and a causal inference tool called Trace Method
\cite{JanzingHS2010}. Let $X$ and $Y$ be $n$-dimensional variables, related by the linear structural equation 
\[
Y=AX+E\,,
\]
where $A$ is an $m\times n$ structure matrix and $E$ is a $n$-dimensional noise variable independent of $X$.
\cite{JanzingHS2010} postulate the following independence condition between the covariance matrix of input distribution $\Sigma_X$ and $A$:
\begin{post}[Trace Condition]\label{pos:tc} 
\begin{equation}
\tau_m(A\Sigma_X A^T) = \tau_n (\Sigma_X) \tau_n (A^TA) \,,
\label{eq_tc}
\end{equation}
approximately, where $\tau_n(B)$ denotes the renormalized trace ${\rm tr}(B)/n$.
\end{post}
The postulate can be justified by random matrix theory with large $m$ when $A$ and $\Sigma_X$  are independently chosen according to priors satisfying appropriate symmetry assumptions \cite{JanzingHS2010}. In the association between SIC and trace method we only consider square matrices and therefore $m=n$.

To quantify the violation of (\cref{eq_tc}) we introduce the following quantity:
\begin{defn}[Tracial Dependency Ratio (TDR)]
The tracial dependency ratio is given by
\begin{equation}\label{eq:tdr}
r_{X\to Y}:= \frac{ \tau_n(A\Sigma_X A^T) }{ \tau_n (\Sigma_X) \tau_n (A^TA) }\,.
\end{equation}
\end{defn}
We thus can see that the tracial ratio plays a role analog to our spectral dependency ratio $\rho$ in the finite dimensional case. We can actually show that SIC can be viewed as a limit case of the Trace Condition by defining the following truncated system.
\begin{defn}\label{def:trunc}
To any given infinite dimensional linear system $\textbf{X}\mapsto \textbf{Y}=\textbf{h}*\textbf{X}$, the truncated system of order $N$ is defined by zeroing the input and the output values for integers $k$ such that $-N\leq k < N$:
$$\textbf{X}'_{N}=\textbf{X}_{-N:N-1} \mapsto \textbf{Y}'_{N}=(\textbf{h}*\textbf{X}'_N)_{-N:N-1},$$
\end{defn}
Note that in this definition for each $N$, the vectors $\textbf{Y}'_{-N:N-1}$ are inherently different. The mapping defined in this way is linear and can be written as $\textbf{Y}'=\textbf{H}\textbf{X}'$ with $[H]_{ij}=h_{i-j}$, such that the trace method can be applied to it. We then have the following result showing that SIC can be obtained from the Trace Condition as an appropriate
limit:
\begin{thm}\label{thm_limit}
Let $r_{\textbf{X}'_N\to \textbf{Y}'_N}$ represent the tracial ratio for the truncated systems of order $N$ for a given linear system with SDR $\rho_{\textup{\textbf{X}}\to\textup{\textbf{Y}}}$. Then
\[
\lim_{N\to\infty} r_{\textbf{X}'_N\to \textbf{Y}'_N} =\rho_{\textup{\textbf{X}}\to\textup{\textbf{Y}}}
\]
\end{thm}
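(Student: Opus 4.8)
The plan is to identify every matrix in the tracial ratio $r_{\textbf{X}'_N\to\textbf{Y}'_N}$ as a truncated Toeplitz matrix whose generating symbol is one of the spectral quantities appearing in $\rho_{\textbf{X}\to\textbf{Y}}$, and then to show that each renormalized trace converges to the integral over $\mathcal{I}$ of the relevant symbol (or product of symbols). With $n=2N$ the dimension of the truncated system, the matrix $H$ with $[H]_{ij}=h_{i-j}$ is Toeplitz with symbol $\widehat{h}$ (so $H^T$ has symbol $\overline{\widehat{h}}$, using that $\textbf{h}$ is real), and $\Sigma_X$ with $[\Sigma_X]_{jk}=C_{xx}(j-k)$ is Toeplitz with symbol $S_{xx}$. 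I would then handle the three factors $\tau_n(\Sigma_X)$, $\tau_n(H^TH)$ and $\tau_n(H\Sigma_X H^T)$ of
\[
r_{\textbf{X}'_N\to\textbf{Y}'_N}=\frac{\tau_n(H\Sigma_X H^T)}{\tau_n(\Sigma_X)\,\tau_n(H^TH)}
\]
separately.

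The denominator is immediate. Since every diagonal entry of $\Sigma_X$ equals $C_{xx}(0)$, we get $\tau_n(\Sigma_X)=C_{xx}(0)=\langle S_{xx}\rangle$ exactly, for every $N$. Expanding the diagonal of $H^TH$ and counting how often each lag occurs yields the Fej\'er-weighted identity
\[
\tau_n(H^TH)=\sum_{s}\Bigl(1-\tfrac{|s|}{n}\Bigr)_{\!+} h_s^2\,,
\]
so that, $\textbf{h}$ having finite energy ($\|\textbf{h}\|_2^2<\infty$ by the opening Proposition), dominated convergence gives $\tau_n(H^TH)\to\sum_s h_s^2=\|\textbf{h}\|_2^2=\langle|\widehat{h}|^2\rangle=\langle S_{yy}/S_{xx}\rangle$.

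The numerator carries the real work. Expanding the diagonal of $H\Sigma_X H^T$ and substituting $a=i-j$, $b=i-k$ gives $\tau_n(H\Sigma_X H^T)=\frac1n\sum_i\sum_{(a,b)\in W_i\times W_i}h_a h_b\,C_{xx}(b-a)$, where $W_i$ is the length-$n$ index window seen by row $i$. Inserting $C_{xx}(b-a)=\int_{\mathcal I}S_{xx}(\nu)e^{\mathbf{i}2\pi\nu(b-a)}d\nu$ into the \emph{untruncated} double sum and recognizing $\widehat{h}(\nu)\overline{\widehat{h}(\nu)}$ shows it equals $\int_{\mathcal I}S_{xx}|\widehat{h}|^2 d\nu=\langle S_{yy}\rangle$; this sum is absolutely convergent because $\sum_{a,b}|h_a||h_b||C_{xx}(b-a)|\le P(\textbf{X})\|\textbf{h}\|_1^2<\infty$. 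It then suffices to show that the boundary defect vanishes after averaging: bounding $|C_{xx}(b-a)|\le C_{xx}(0)=P(\textbf{X})$ and using $\|\textbf{h}\|_1<\infty$ (BIBO stability), the per-row defect is at most $2P(\textbf{X})\|\textbf{h}\|_1\sum_{a\notin W_i}|h_a|$, whence $\frac1n\sum_i$ of it is bounded by a Ces\`aro-weighted tail $2P(\textbf{X})\|\textbf{h}\|_1\sum_a\min(|a|/n,1)|h_a|\to0$ by dominated convergence. Hence $\tau_n(H\Sigma_X H^T)\to\langle S_{yy}\rangle$.

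I expect the $\tau_n(H\Sigma_X H^T)$ step to be the main obstacle, precisely because a product of three Toeplitz matrices is not Toeplitz and one must show its edge corrections are $o(n)$; the summability hypotheses $\|\textbf{h}\|_1<\infty$ and $P(\textbf{X})<\infty$ are exactly what make the Ces\`aro-averaged tails collapse (alternatively one could cite a Szeg\H{o}/Avram--Parter theorem for traces of products of Toeplitz matrices, but the explicit Fej\'er-weighting argument keeps the hypotheses transparent). Assembling the three limits gives
\[
\lim_{N\to\infty}r_{\textbf{X}'_N\to\textbf{Y}'_N}=\frac{\langle S_{yy}\rangle}{\langle S_{xx}\rangle\,\langle S_{yy}/S_{xx}\rangle}=\rho_{\textbf{X}\to\textbf{Y}}\,,
\]
as claimed.
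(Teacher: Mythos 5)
Your proof is correct, and for the denominator it coincides with the paper's argument: your Fej\'er-weighted identity $\tau_n(H^TH)=\sum_s(1-|s|/n)_+h_s^2$ is exactly the computation behind the paper's \cref{lem:convseries} (the paper concludes by an explicit $\varepsilon$-argument for an increasing bounded sequence where you invoke dominated convergence), and $\tau_n(\Sigma_X)=C_{xx}(0)=\langle S_{xx}\rangle$ is how the paper handles that factor too, though it dresses this up as an application of Szeg\H{o}'s convergence theorem (\cref{thm:szego}) with $F$ the identity, which, as you note, is exact for every $N$ and needs no limit theorem. The genuine difference is in the numerator, which is indeed where the difficulty sits. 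The paper never expands $\tau_n(H\Sigma_XH^T)$ directly; instead it brings in the true stationary output process $\textbf{Y}$ and proves (\cref{lem:windowed-cov}) that $|\tau(\Sigma_{Y_{-N:N-1}})-\tau(\Sigma_{Y'_{-N:N-1}})|\le\sqrt{\tau(\Sigma_{Y-Y'})}\,\sqrt{\tau(\Sigma_{Y+Y'})}\to 0$ via Cauchy--Schwarz, boundedness of the second factor, and a tail estimate on $\sum_l |h_l|$ for the first, after which the Toeplitz structure of $\Sigma_{Y_{-N:N-1}}$ gives the limit $\langle S_{yy}\rangle$. Your route --- Fubini on the absolutely convergent bilinear sum $\sum_{a,b}h_ah_bC_{xx}(b-a)=\langle S_{yy}\rangle$ together with the Ces\`aro-weighted defect bound $2P(\textbf{X})\|\textbf{h}\|_1\sum_a\min(|a|/n,1)|h_a|\to 0$ --- reaches the same limit without ever introducing $\textbf{Y}$ and without any Toeplitz spectral theory, so it is more elementary and keeps the two hypotheses actually used ($\|\textbf{h}\|_1<\infty$ and summability of $C_{xx}$) visible at every step. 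What the paper's detour buys is an intermediate statement of independent interest: the truncated system's output statistics converge to those of the genuine stationary output, which supports interpreting the TDR as a finite-sample estimator of the SDR; what yours buys is brevity and self-containment. Both ultimately control the same edge effects by the same summability of $\textbf{h}$, so the two proofs are equally valid.
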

The proof, together with two necessary lemmas is available in \textit{supplementary material}.

\section{SIC for vector autoregressive models}
SIC and Granger causality rely on completely different assumptions
but both apply to linear time series models. In this section, we study
the classical Vector Autoregressive (VAR) model used in Granger causality
from the SIC perspective to better understand the relation.

\subsection{VAR model}
We assume the observed time series are generated by a VAR
model such that $x$ Granger causes $y$.
\begin{eqnarray}
X_t & = & \sum_k a_{k}X_{t-k}+\epsilon_t\label{eq:1a}\\
Y_t & = & \sum_k b_{k}Y_{t-k}+\sum_k c_{k}X_{t-k}+\xi_{t}\label{eq:1b}
\end{eqnarray}
Both noise terms $\epsilon$ and $\xi$ in this expression are i.i.d
normal noises. 

\subsection{Applying SIC to VAR models}
We want to rewrite this expression such that $\textbf{Y}$ is obtained from $\textbf{X}$
by a deterministic linear time invariant filter. We observe that the VAR model can be cast as linear time invariant filter
 if we neglect the additive noise
$\xi$. Indeed, then the mechanism is the following ARX (AutoRegressive with eXogenous input) model \cite{keesman2011system}.
\begin{equation}
Y_t=\sum_k b_{k}Y_{t-k}+\sum_k c_{k}X_{t-k}\label{eq:2}
\end{equation}
Using basic properties of the Z-transform, we can
derive the following analytic expressions of the input PSD $S_{xx}$: \domi{ maybe derive it in the appendix?}
\[
S_{xx}(\nu)=|\widehat{n}(\nu)|^{2}=|\widetilde{n}(\exp(2\pi\mathbf{i}\nu))|^{2}\,,
\]
with
\[
\widetilde{n}(z)=\frac{1}{1-\sum_{k} a_{k}z^{-k}}.
\]
Moreover, the transfer function corresponding to the mechanism
in equation~\cref{eq:2} is
\[
\widetilde{m}(z)=\frac{\sum_{k} c_{k}z^{-k}}{1-\sum_{k} b_{k}z^{-k}}
\]
As a consequence, testing SIC on the VAR model in the forward direction amounts (when neglecting
the filtered noise $\xi$), to test independence between
\begin{equation}\label{eq:trans}
|\widehat{h}(\nu)|^2=|\widetilde{m}(\exp(2\pi\mathbf{i}\nu))|^{2}
\end{equation}
 and
\begin{equation}\label{eq:input}
S_{xx}(\nu)=|\tilde{n}(\exp(2\pi \mathbf{i} \nu)|^2\,,
\end{equation}
 which are parametrized by the coefficients $\{b_{k},c_{k}\}$
and $\{a_{k}\}$ respectively.
We conjecture that
a concentration of measure result similar to \Cref{thm_COM_FIR} holds
stating that independent choice of the coefficients from an
appropriate symmetric distribution typically yields small correlations between (\cref{eq:trans}) and (\cref{eq:input}). This will be tested empirically in the Experiments section. Additionally, the robustness of our approach to noise in the VAR model will be addressed extensively in a longer version of this manuscript.

\michel{ONLY FOR PAPER VERSION BUT WE SHOULD MENTION IT SOMEWHERE... LIKE "THE EFFECT OF NOISE WILL BE ADDRESSED IN A LONGER VERSION?" In practice, we reduce the influence of the neglected noise term $\xi$ by fitting
a linear predictive model to equation~(\cref{eq:1b}). The influence of the noise term $\xi$ affecting only the residual error $R$ of
the prediction of $\textbf{Y}$ given $\textbf{X}$, we apply the SIC criterion to the denoised output $Y-R$. Check there can't be an estimation bias induced by the noise term in empirical time series}

\subsection{Comparison of SIC and Granger causality}
The bivariate VAR model above is the typical model where Granger causality works. To recall the idea of the latter, note that
it infers that there is an influence from $\textbf{X}$ to $\textbf{Y}$ whenever
predicting $\textbf{Y}$ from its past is improved by accounting for the past of $\textbf{X}$. Rephrasing this in terms of conditional independences, $\textbf{X}$ is inferred to cause $\textbf{Y}$ whenever 
$Y_t$ is not conditionally independent of $X_{t-1},X_{t-2},\dots$, given $Y_{t-1},Y_{t-2},\dots$. Within the context of the above linear model, knowing $X_{t-1},X_{t-2},\dots$ reduces the variance of $Y_t$, given $Y_{t-1},Y_{t-2},\dots$ because then
the noise $\nu_t$ is the only remaining source of uncertainty.
Without knowing $X_{t-1},X_{t-2},\dots$, we have additional uncertainty due to the contribution of $\epsilon_{t-1},\epsilon_{t-2},\dots$. 

SIC, on the other hand, does not rely on detecting whether
$\textbf{X}$ helps in improving the prediction of $\textbf{Y}$. 
As demonstrated above, SIC applied to a bivariate VAR model
boils down to quantifying independence between two linear filters
defined by set of coefficients, the filter generating the input
with transfer function $\widehat{n}$ and the filter of the mechanism with
transfer function $\widehat{m}$. 
This is a completely different concept. One can easily imagine that the coefficients 
 $\{b_{k},c_{k}\}$
and $\{a_{k}\}$ can be hand-designed such that the functions
(\cref{eq:trans}) and (\cref{eq:input}) are correlated. 
This would spoil SIC, but leave Granger unaffected.
On the other hand, the following subsection describes a scenario where Granger fails but SIC still works.

\subsection{Sensitivity to Time Lag}
Consider two time series $\{X_t\}$ and $\{Y_t\}$ where $\{X_t\}$ is a white noise and
\begin{gather*}
\forall t\in\mathbb{Z},\quad \indent Y_t=c Y_{t-1}+X_{t-1},
\end{gather*}
for a given $c$. It can be easily seen that this type of input and output can be simulated using an IIR filter with $(a_1,a_2)=(1,c)$ and $b_1=1$ in (\cref{eq:diff}) and the rest of the coefficients are zero (please refer to the definition of coefficients in section \cref{sec:synthetic}). The infinite DAG for this causal structure can be seen in   \cref{fig:dag}.
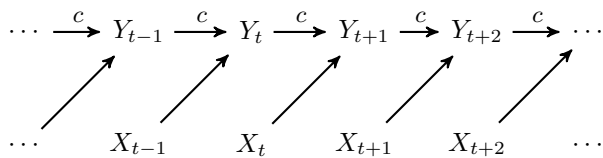
\begin{figure}[!hbt]
\centering
\begin{tikzpicture}[scale=0.15,->,>=stealth',shorten >=1pt,auto,node distance=1.5cm,
  thick,main node/.style={font=\sffamily\bfseries}]
  
  \node[main node] (0) {$\cdots$};
  \node[main node] (1) [right of=0] {$Y_{t-1}$};
  \node[main node] (2) [right of=1] {$Y_t$};
  \node[main node] (3) [right of=2] {$Y_{t+1}$};
  \node[main node] (4) [right of=3] {$Y_{t+2}$};
  \node[main node] (inf)[right of=4] {$\cdots$};

  \node[main node] (0')[below of=0] {$\cdots$};
  \node[main node] (5) [below of=1]{$X_{t-1}$};
  \node[main node] (6) [below of=2] {$X_t$};
  \node[main node] (7) [below of=3] {$X_{t+1}$};
  \node[main node] (8) [below of=4] {$X_{t+2}$};
  \node[main node] (inf')[below of=inf] {$\cdots$};

  \path[every node/.style={font=\sffamily\small}]
    (0) edge node[above] {$c$} (1)
    (1) edge node[above] {$c$} (2)
    (2) edge node[above] {$c$} (3)
    (3) edge node[above] {$c$} (4)
    (4) edge node[above] {$c$} (inf)

    (0') edge (1)
    (5) edge (2)
    (6) edge (3)
    (7) edge (4)
    (8) edge (inf);

\end{tikzpicture}
\caption{The original causal structure with instantaneous causal effect}
\label{fig:dag}
\end{figure}

Now if there would be a measurement delay of length $k$ for \textbf{Y}, the observed values will be a new time series, say $\tilde{\textup{\textbf{Y}}}$, where $\tilde{Y}_t=Y_{t-k}$. Although the ground truth is $\textup{\textbf{X}}\to\tilde{\textup{\textbf{Y}}}$ independent of $k$, Granger causality only infers the correct causal structure if $k\leq 0$ (where there is a lag in measurement of $\textbf{X}$, but not $\textbf{Y}$). However SIC always infers the correct direction (except when $c=0$ and the time structure is spoiled). This is because the PSD of the white noise $\textup{\textbf{X}}$ is constant and depends only on the total power, i.e,
\[
S_{xx}(\nu) = {\rm Var} (X_t)=P\{X\}\,,
\]
for all $\nu \in [-1/2,1/2]$.
and obviously, this constant remains the same for the lagged time series. Thus, SIC correctly identifies the causal structure (except when $c=0$ in which case the dependence to time is completely spoiled).\domi{should we discuss the irrelevance of the lag only for white noise? Is it really difficult for the general case?}

\section{Experiments}
In this section we study our causal inference algorithm using synthetic experiments and apply it to several real world data sets.
\subsection{Synthetic Data: ARMA filters and processes}
\label{sec:synthetic}
We designed synthetic experiments to assess the validity of the SIC approach. The data generating process is as follows. 
The LTI system $\mathcal{S}$ modeling the mechanism is chosen among the family of \textit{ARMA($FO$,$BO$) filters} with parameters $(\textbf{a},\textbf{b})$ defined by input-output difference equation:
\begin{gather}
\label{eq:diff}
y_n  = \frac{1}{a_{0}}(\sum_{i=0}^{FO} b_{i} x_{n-i} + \sum_{j=1}^{BO} a_{j} y_{n-j}).
\end{gather}
 For these filters $FO$ is known as the feedforward order and $BO$ is the feedback order. $a_i$'s and $b_i$'s are known as feedback and feedforward coefficients respectively. Note that when $FO(\mathcal{S})=0$, the system is called and autoregressive filter. Alternatively, $BO(\mathcal{S})=0$ corresponds to the family of Finite Impulse Response (FIR) or Moving Average filters. Whenever $BO(\mathcal{S})\neq0$, the filter has Infinite Impulse Response (IIR). The input of the causal model will be chosen among the family of \textit{ARMA($FO$,$BO$) processes}, which are generated by filtering an i.i.d noise input with an ARMA($FO$,$BO$) filter. We thus chose two filters $\mathcal{S}$ and $\mathcal{S}'$, with parameters $(\textbf{a},\textbf{b})$ and $(\textbf{a}',\textbf{b}')$ respectively.
To simulate a cause effect pair \textbf{X},\textbf{Y}, we generated the cause \textbf{X} by applying $\mathcal{S}$  to a normally distributed i.i.d noise. Then,
 we generated \textbf{Y} by applying  $\mathcal{S}'$ to
 $\textbf{X}$. The feedforward and feedback orders of both systems $\mathcal{S}$ and $\mathcal{S}'$ were chosen identical in all experiments.

In each trial all the elements of vectors $\textbf{a}$, $\textbf{a}'$, $\textbf{b}$ and $\textbf{b}'$ except the first ones (i.e. $a_0,b_0,a'_0,b'_0$ which were fixed to one) were sampled from an isotropic multidimensional Gaussian distribution with variance $0.01$. Coefficients are sampled using rejection sampling such that only BIBO-stable filters are kept.
\domi{I would suggest to also try non-isotropic priors, e.g., with decay.
It's more interesting whether the method also works when the generating process does not coincide with the `ideal' one} \naji{I have tried it but I am not sure how to report it. It actually works perfectly fine similar to the isotropic case. What I did was sorting the coefficients after sampling them isotropically)}

 We simulated sequences of length $10000$. The PSD of $\textbf{X}$ and $\textbf{Y}$ were estimated using Welch's method \cite{welch1967use}. We repeated this experiment $1000$ times. Figure~\cref{fig:Delta_Hist-BO=FO=20_nonoise} shows an example of the distribution of $\rho_{\textbf{X}\to \textbf{Y}}$ and $\rho_{\textbf{Y}\to \textbf{X}}$ and of their difference using 
$FO(\mathcal{S})=BO(\mathcal{S})=FO(\mathcal{S}')=BO(\mathcal{S}')=5$.

\begin{figure}[!hbt]
\centerline
{\includegraphics[width=\linewidth]{\detokenize{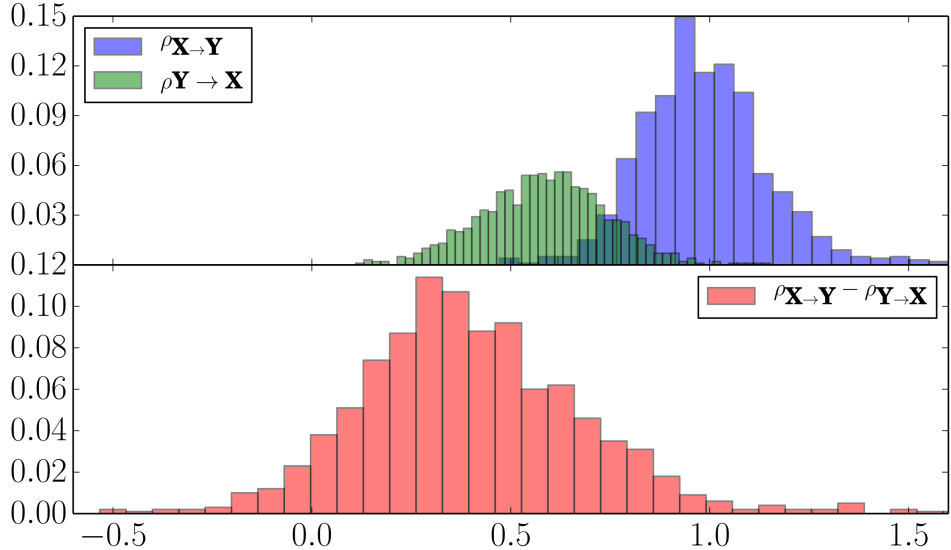}}}
  \caption{Top plot: Histogram for the estimators of $\rho_{X\to Y}$ and $\rho_{Y\to X}$. Bottom plot: Histogram of the estimated difference $\rho_{X\to Y}-\rho_{Y\to X}$}
  \label{fig:Delta_Hist-BO=FO=20_nonoise}
\end{figure}

The SDR for the correct direction of is concentrated around one, while in the wrong direction the estimator stays less inferior to one for most of its probability mass (in this example $\%97.3$). This results in a positive difference between SDR for most of the probability mass. Accordingly, our inference algorithm based on the sign of this  difference \cref{alg:SIC} will select the correct direction in most of the cases. 

\begin{algorithm}
\caption{SIC\_Inference}\label{alg:SIC}
\begin{algorithmic}[1]
\Procedure{SIC\_Inference(\textbf{X},\textbf{Y})}{}
\State $S_{xx} \gets \text{spectrum of } \textbf{X}$
\State $S_{yy} \gets \text{spectrum of } \textbf{Y}$
\State Calculate $\rho_{\textbf{X}\to \textbf{Y}}$ and $\rho_{\textbf{Y}\to\textbf{X}}$ using (\cref{eq:sdr})
\State \emph{Inference Step}:
\If {$\rho_{\textbf{X}\to\textbf{Y}}>\rho_{\textbf{Y}\to \textbf{X}}$} 
\State return $\textbf{X}\to \textbf{Y}$
\Else   
\State return $\textbf{Y}\to \textbf{X}$
\EndIf
\EndProcedure
\end{algorithmic}
\end{algorithm}
Based on this inference algorithm, we test the effect of the filter orders on the performance of the method, where we evaluate the performance of each setting of $FO(\mathcal{S'})$ and $BO(\mathcal{S'})$ over 1000 trials. We varied the orders between $2$ and $21$ and compared the performance of the cases $FO(\mathcal{S'})=BO(\mathcal{S'})$, $FO(\mathcal{S'})=0$ and $BO(\mathcal{S'})=0$. Considering that the experiments are independent and based on the assumption that our method is successful with probability $p$ where $p$ has a binomial distribution, we calculated confidence intervals using Wilson's score interval \cite{wilson1927probable} where $\alpha=0.05$ (and therefore $z_{\alpha/2}=1.96$). The performance increases rapidly with filter order, as can be seen in the plots of ~\cref{fig:perf-dim-fo}. Moreover, the feedforward filter coefficients seem the most beneficial to the approach, since their absence leads to the worst performance (~\cref{fig:perf-dim-fo} red line).

\begin{figure}[!hbt]
\centerline{\includegraphics[width=1\linewidth]
{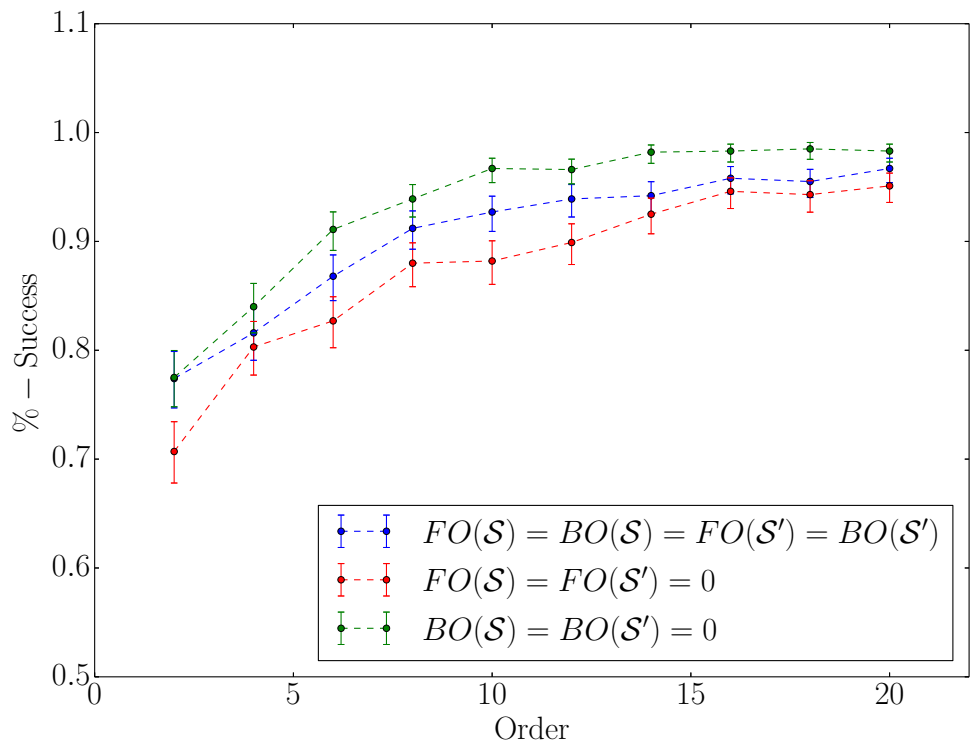}}\caption{SIC performance against filter order for synthetic experiments for different types of filters (see text).}
\label{fig:perf-dim-fo}
\end{figure}

\subsection{Real World Examples}
We tried our method over several examples of real data where the ground truth about the causal structure of the data is known a priori and the data is labeled in a way that the ground truth is $\textbf{X}\to\textbf{Y}$. In the first two examples we plotted the difference of SDR in both directions as a function of the window length used in Welch method which can be seen in ~ \cref{fig:gas_faithful}.
\subsubsection{Gas Furnace \cite{box2013time}}
This dataset consists in 296 time points, with $\textbf{X}$ the gas rate consumed by a gas furnace and $\textbf{Y}$ the produced rate of $\text{CO}_2$. ~\cref{fig:gas_faithful} shows $\rho_{\textbf{X}\to \textbf{Y}}-\rho_{\textbf{Y}\to \textbf{X}}$ against the window length, which was ranging from $50$ to $150$ points.  As illustrated, the difference is always positive and our method is able to correctly infer the right causal direction independent of window length. TiMiNO and Granger causality correctly identified the ground truth in this case as well \cite{peters2012causal}.

\subsubsection{Old Faithful Geyser \cite{azzalini1990look}}
$N=298$ : $\textbf{X}$ contains the duration of an eruption and \textbf{Y} is the time interval to the next eruption of the Old Faithful geyser. Figure \cref{fig:gas_faithful} represents the difference in SDRs as a function of window length with the same configuration as the gas furnace experiment. Again the correct causal direction is inferred by our method independently from the window length as illustrated in ~\cref{fig:gas_faithful}. In this case TiMiNO correctly identifies the cause from effect but neither linear nor non-linear Granger causality infer the correct causal direction \cite{peters2012causal}.
\begin{figure}[!hbt]
\centering
\includegraphics[width=\linewidth]{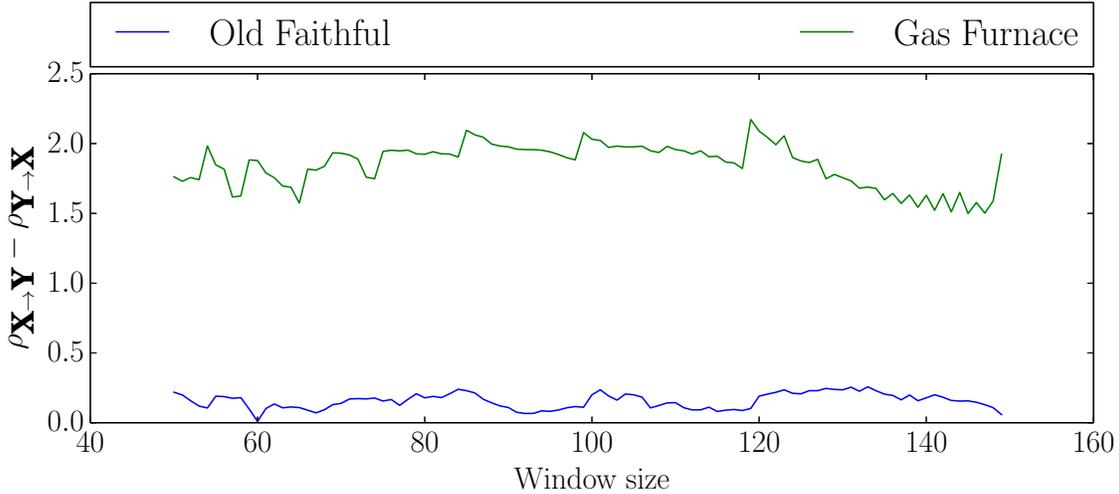}
\caption{Difference between the estimators of SDRs in both directions against window length of the Welch periodogram.}
\label{fig:gas_faithful}
\end{figure}
\subsubsection{LFP recordings of the Rat Hippocampus}
\label{ex:rat}
It is known that contrary to neocortex where connectivity between areas is bidirectional, monosynaptic connections between several regions of the hippocampus are mostly unidirectional \cite{andersen2006hippocampus}. An important example of such connectivity is between the CA3 and CA1 subfields \cite{andersen2006hippocampus}. Despite this anatomical fact, a study of causality based on Local Field Potential (LFP) recordings of CA1 and CA3 of the hippocampus of the rat during sleep reports  that Granger causality infers strong bidirectional relations between the two areas \cite{baccala1998studying}. \cite{baccala1998studying} explains the possible reasons of such result as feedback loops involving cortex and medial septum, and diffuse connections going from CA1 to CA3. 

To do a comparison with Granger causality, we applied our framework to recordings from those regions using a publicly available dataset\footnote{\href{http://crcns.org/data-sets/hc}{http://crcns.org/data-sets/hc}} \cite{mizuseki2009theta,epa:CRCNS}. LFP's were recorded using a $8$ shank probe having $64$ channels downsampled to $1252$Hz. Shanks were attributed by experimentalists to the CA1 and CA3 areas (leaving $32$ channels for each area). For more information on the details of gathered data please refer to \cite{epa:CRCNS}. We used the data for rat ``vvp01'' during a period of sleep and a period of active behaviour in a linear environment. 
We applied linear Granger causality using an implementation from the \textit{statsmodel} Python library\footnote{\href{http://statsmodels.sourceforge.net/}{Statsmodels: Statistical library for Python}. More details on null hypothesis for Granger causality can be found on the website.}. We considered a forced decision scheme for Granger causality (to make it comparable to our method), were we select the correct Granger causal direction as the one having the lowest $p$-value for the null hypothesis of absence of causal influence. Following the usual methodology of causality analysis  \cite{baccala1998studying,cadotte2010granger} we divided the duration of ten minutes into $300$ intervals of two seconds ($N=2504$) to reduce the effect of nonstationarity in data analysis, and performed SIC causal inference on each interval for each electrode pair. We took two different approaches to report assess the performance of methods: one, based on a majority vote over all $300$ intervals for each channel pair, and two, by assessing the average performance based on individual time intervals. The results are plotted as histograms in ~ \cref{fig:linear} and they show that SIC clearly outperforms Granger causality on this dataset. The confidence intervals are once again based on Wilson score but obviously this time the in dependancy assumption between the trials is not well justified, specially for pooling all the results.
\begin{figure}[!hbt]
\centerline{\includegraphics[width=\linewidth]{\detokenize{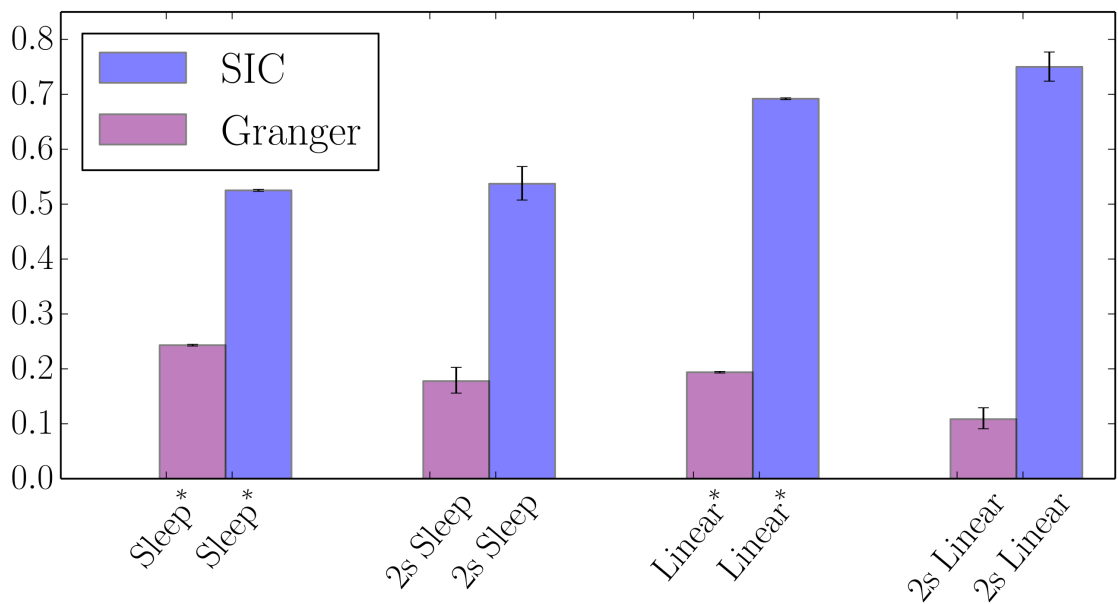}}}
\caption{Average performance of the linear Granger causality and SIC methods for deciding CA3$\to$CA1 ground truth direction against the opposite. For both the linear and sleep sessions the performance is significantly above the chance level for SIC. $*$ indicates the use of a majority voting scheme.}
\label{fig:linear}
\end{figure}
\subsubsection{Characterizing the Echo}
The echo effect of a room over a sound generated in the room can be well estimated by a convolution of the real signal with a function known as room impulse response function. In this experiment we used an open source database of room Impulse Response Function (IRF) available at the Open AIR library\footnote{\href{http://www.openairlib.net/}{{Open AIR}: Open source library for acoustic IRFs.}}. We chose the IRFs for Elevden Hall, Elevden, Suffolk, England and Hamilton Mausoleum, Hamilton, Scotland. We convolved these signals with $30\pm 5$ seconds segments of two classical music pieces: the first movement of Vivaldi's Winter Concerto consisting of $9190656$ data points, and the Lacrimosa of Mozart's Requiem, consisting of $8842752$ points, both `.wav' files with the rate of $44100$Hz. Regardless of the segment the SDR in forward direction is considerably larger than the SDR in the backward direction as can be seen in \cref{fig:echo}.
\begin{figure}[!hbt]
\centerline{\includegraphics[width=\linewidth]{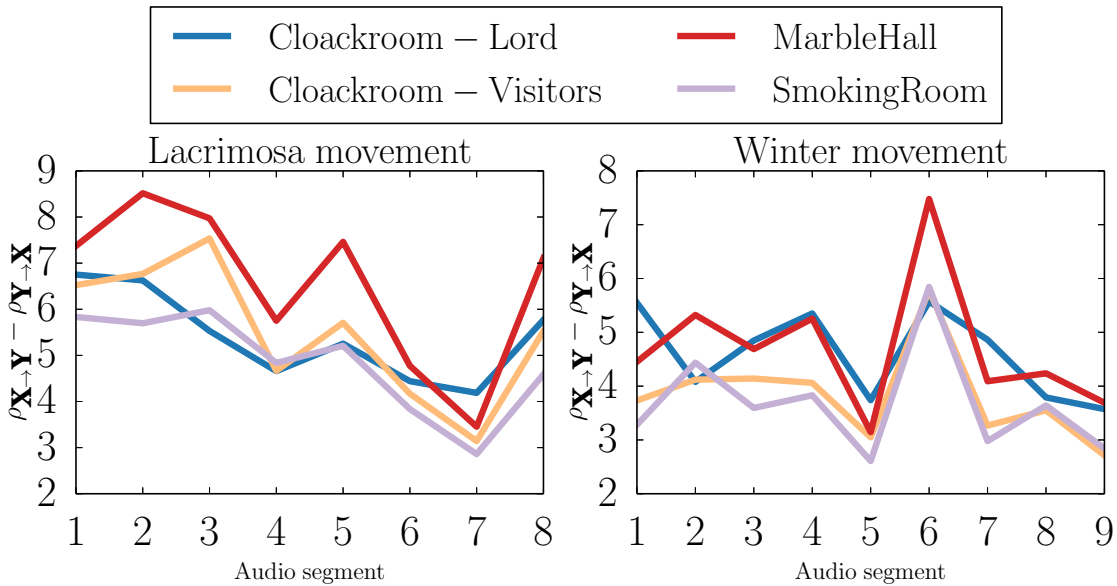}}
\caption{The plots represent the value $\rho_{X_t\to Y_t}-\rho_{Y_t\to X_t}$ for 4 different environments as a function of different music segments. The method correctly infers the causal direction in all the cases.}
\label{fig:echo}
\end{figure}
In another experiment we used a computer to play the musical pieces above in an academic Lecture Hall (labelled as ``Hall'' in plots) and in an office room (labelled as ``Room'' in plots) and recorded the echoed version in the environment. In a series of different tests, we split the data into $9,17,33,65,129$ pieces, and we ignored the last piece so that all the pieces would have an equal length. In each test we averaged the performance of our causal inference method over all the segments and plotted this performance against the size of the window length in Welch method. The window size was varied between $500$ and half of the length of the music segment length (which is dependent on the number of segments). The results can be found in ~\cref{fig:real_echo} and show a very good performance of the approach for large window lengths \michel{what is the length unit?}.
\begin{figure}[!hbt]
\includegraphics[width=\linewidth]{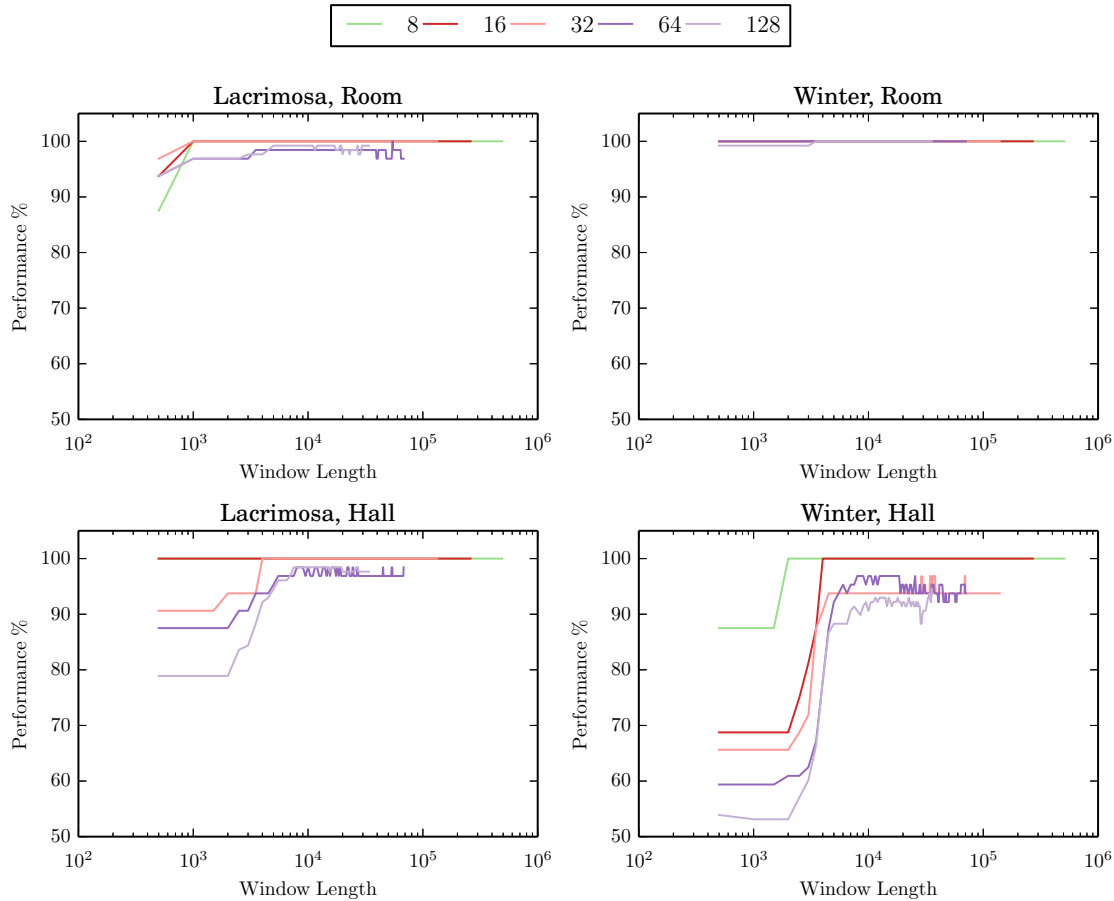}
\caption{The performance of the method over real echoed audio signals recorded simultaneously by playing the piece in two different closed environments that have their own acoustic structure.}
\label{fig:real_echo}
\end{figure}
\vspace*{-.5cm}
\section{Conclusion}
We have introduced a causal discovery method for time series based on the SIC postulate, assuming a LTI relationship for a given pair of time series $\textbf{X}$ and $\textbf{Y}$, such that either $\textbf{X} \to \textbf{Y}$ or $\textbf{Y} \to \textbf{X}$. Theoretical justifications are provided for this postulate to lead to identifiability. Interestingly, the method provides and extension of the recently proposed Trace Method approach to the time series setting. Encouraging experimental results have been also presented on real world and synthetic data. Specially this method proved to be more effective than linear Granger causality on LFP recordings from CA1 and CA3 hippocampal areas of rat's brain, assuming a ground truth causal direction from CA3 to CA1 based on anatomy. We suggest that this method can provide a new perspective for causal inference in time series based on assumptions fundamentally different from Granger causality. We will address the existence of confounders, establish a statistical significance test (for example using a procedure inspired by \cite{Zscheischler2011}), and extend this method to multivariate time series in future work.

\bibliographystyle{plain}
\bibliography{Bibliography}
\section*{Supplementary Material}

We have prepared an appendix to address the proofs for \Cref{lem:violation}, \cref{thm_COM_FIR,thm_limit} which we provide in two different sections. For this purpose we will use a few extra notations which we define here. We will also use $\tau(A)$ and $\tau_N(A)$ interchangeably for the normalized trace of a square matrix $A$ of order $N$.
\label{sec:fir}

\section{Proof of \Cref{lem:violation}}
\begin{lem}\label{lem_cs}
For $f\in L^2(\mathcal{I})$ positive, non-constant, such that $1/f\in L^2(\mathcal{I})$, we have
$$
\int_\mathcal{I} f(x)^2 dx . \int_\mathcal{I} \frac{1}{f(x)^2} dx > 1
$$
\end{lem}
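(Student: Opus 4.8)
The plan is to read this off as a strict Cauchy--Schwarz inequality. First I would apply the Cauchy--Schwarz inequality in the Hilbert space $L^2(\mathcal{I})$ to the pair $f$ and $1/f$, both of which belong to $L^2(\mathcal{I})$ by hypothesis:
\[
\left( \int_\mathcal{I} f(x)\cdot \frac{1}{f(x)}\, dx \right)^2 \le \int_\mathcal{I} f(x)^2\, dx \cdot \int_\mathcal{I} \frac{1}{f(x)^2}\, dx\,.
\]
The essential observation is that the integrand on the left collapses to the constant $1$, and since $\mathcal{I}=[-1/2,1/2]$ has unit Lebesgue measure the whole left-hand side equals $1$. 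This already delivers the non-strict inequality $\int_\mathcal{I} f^2 \cdot \int_\mathcal{I} 1/f^2 \ge 1$.

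To promote this to the claimed strict inequality I would invoke the equality case of Cauchy--Schwarz. Equality holds if and only if $f$ and $1/f$ are linearly dependent in $L^2(\mathcal{I})$, i.e.\ there is a scalar $\lambda$ with $1/f = \lambda f$ almost everywhere, equivalently $f^2 = 1/\lambda$ almost everywhere. Since $f$ is positive, this forces $f$ to equal a single constant almost everywhere, contradicting the hypothesis that $f$ is non-constant. Hence equality is impossible and the inequality is strict.

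The argument has essentially no obstacle once the product $f\cdot(1/f)$ is seen to be identically $1$; the only point demanding care is the equality-case analysis, which must actively exclude the degenerate constant solution rather than merely quote the inequality. The hypotheses are exactly what is needed: $f,1/f\in L^2(\mathcal{I})$ make both factors finite and legitimize the equality characterization, while positivity of $f$ is what turns $f^2=\text{const}$ into $f=\text{const}$. This lemma then feeds directly into \Cref{lem:violation}: substituting $f=|\widehat{h}|$ and using $S_{yy}=|\widehat{h}|^2 S_{xx}$, the product $\rho_{\textbf{X}\to \textbf{Y}}\,\rho_{\textbf{Y}\to \textbf{X}}$ simplifies (the factors of $S_{xx}$ cancelling) to $\big(\langle |\widehat{h}|^2\rangle\,\langle 1/|\widehat{h}|^2\rangle\big)^{-1}$, so the lemma yields $\rho_{\textbf{X}\to \textbf{Y}}\,\rho_{\textbf{Y}\to \textbf{X}}<1$ at once.
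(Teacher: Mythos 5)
Your proof is correct and follows essentially the same route as the paper: Cauchy--Schwarz applied to the pair $f$ and $1/f$ in $L^2(\mathcal{I})$, using that $\langle f, 1/f\rangle = 1$ because $\mathcal{I}$ has unit length, with strictness obtained from the equality case (collinearity of $f$ and $1/f$ would force $f$ to be constant). The only difference is presentational—you spell out the equality-case analysis that the paper compresses into a single remark.
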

\begin{proof}
Using Cauchy-Schwartz inequality for the scalar product \
$$
\langle f(x)\,,\,\frac{1}{f(x)}\rangle =\int_\mathcal{I} f(x). \frac{1}{f(x)} dx=1 \,.
$$ 
Inequality is strict since $f$ and $1/f$ are not collinear (otherwise $f$ would be constant).
\end{proof}

\begin{lem}\label{lem_nonconst}
Let $f\in L^1(\mathcal{I})$ be positive, non-constant, such that $1/f\in L^1(\mathcal{I})$ and $\int_\mathcal{I}f(x)dx =1$.

Assume $\,\exists \alpha>0, \forall x \in \mathcal{I},f(x)\leq 2-\alpha\,$, 

then
$$
\int_\mathcal{I}f(x)dx.\int_\mathcal{I}\frac{1}{f(x)}dx\geq 1+\alpha \int_\mathcal{I}(f(x)-1)^2 dx
$$
\end{lem}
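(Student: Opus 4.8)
The plan is to exploit the two normalizations $\int_\mathcal{I} f\,dx = 1$ and $|\mathcal{I}| = 1$ to reduce the claimed inequality to a one-sided lower bound on $\int_\mathcal{I} 1/f$. Since $\int_\mathcal{I} f\,dx = 1$, the left-hand side equals $\int_\mathcal{I} \tfrac{1}{f(x)}\,dx$, so the statement is equivalent to
\[
\int_\mathcal{I} \frac{1}{f(x)}\,dx \;\geq\; 1 + \alpha \int_\mathcal{I} (f(x)-1)^2\,dx.
\]
I would therefore work directly with $\int_\mathcal{I} 1/f$ and try to extract the ``excess over $1$'' as a weighted version of $\int_\mathcal{I}(f-1)^2$.

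The key step is the pointwise algebraic identity
\[
\frac{1}{f} = (2 - f) + \frac{(f-1)^2}{f},
\]
which one verifies immediately by clearing the denominator: $1 - (2-f)f = (f-1)^2$. Integrating this over $\mathcal{I}$ and using $\int_\mathcal{I} f\,dx = 1$ together with $\int_\mathcal{I} 2\,dx = 2$ (as $|\mathcal{I}| = 1$) gives $\int_\mathcal{I} \tfrac{1}{f}\,dx = 1 + \int_\mathcal{I} \tfrac{(f-1)^2}{f}\,dx$. Note that integrability of $(f-1)^2/f$ is automatic here, since it is the difference $\tfrac{1}{f} - (2 - f)$ of two $L^1(\mathcal{I})$ functions. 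The whole claim now reduces to proving $\int_\mathcal{I} \tfrac{(f-1)^2}{f}\,dx \geq \alpha \int_\mathcal{I} (f-1)^2\,dx$.

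For this I would invoke the hypothesis $f \leq 2 - \alpha$ pointwise, which yields $\tfrac{1}{f} \geq \tfrac{1}{2-\alpha}$ and hence $\tfrac{(f-1)^2}{f} \geq \tfrac{1}{2-\alpha}(f-1)^2$ at every point; integrating gives $\int_\mathcal{I} \tfrac{(f-1)^2}{f}\,dx \geq \tfrac{1}{2-\alpha}\int_\mathcal{I}(f-1)^2\,dx$. It then remains only to check the elementary scalar inequality $\tfrac{1}{2-\alpha} \geq \alpha$, which is equivalent to $1 \geq \alpha(2-\alpha)$, i.e. to $(1-\alpha)^2 \geq 0$, and is thus always true. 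Chaining these bounds closes the argument.

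I do not expect a serious obstacle: the proof is essentially computational once the identity is spotted. The two points to watch are, first, that one should use the exact splitting $\tfrac{1}{f}=(2-f)+(f-1)^2/f$ rather than a Taylor-type expansion of $1/f$ around $1$, and second, that $2-\alpha > 0$ so that the pointwise division is legitimate; the latter is guaranteed because $f$ is non-constant with mean $1$, forcing $\mathrm{ess\,sup}\,f > 1$ and therefore $2 - \alpha \geq \mathrm{ess\,sup}\,f > 1 > 0$ (which incidentally shows $\alpha < 1$).
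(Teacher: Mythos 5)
Your proof is correct, and it takes a recognizably different (and in fact cleaner) route than the paper's. The paper substitutes $s = f-1$, writes the excess $\int_\mathcal{I} f \cdot \int_\mathcal{I} \tfrac{1}{f} - 1$ as $\int_\mathcal{I} \frac{-s}{1+s}\,dx$, bounds the integrand pointwise below by the cubic $s^2 - s^3 - s$, integrates using $\int_\mathcal{I} s\,dx = 0$ to obtain $\int_\mathcal{I} s^2(1-s)\,dx$, and finally uses $1-s = 2-f \geq \alpha$. Your exact splitting $\frac{1}{f} = (2-f) + \frac{(f-1)^2}{f}$ is the same underlying algebra, but it lets the hypothesis $f \leq 2-\alpha$ enter at a different place: you lower-bound the weight $\frac{1}{f}$ by the constant $\frac{1}{2-\alpha}$ and defer the loss to the scalar inequality $\frac{1}{2-\alpha} \geq \alpha$ (i.e.\ $(1-\alpha)^2 \geq 0$), whereas the paper lower-bounds $\frac{1}{f}$ by $2-f$ pointwise (its cubic inequality is equivalent to $(1-f)^2 \geq 0$) and then replaces $2-f$ by $\alpha$. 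Your version buys two things. First, a strictly sharper intermediate bound, $\int_\mathcal{I}\frac{1}{f}\,dx \geq 1 + \frac{1}{2-\alpha}\int_\mathcal{I}(f-1)^2\,dx$, of which the stated lemma is a weakening. Second, it sidesteps the paper's justification of its cubic bound, which as written is incomplete: convexity of $x \mapsto \frac{-x}{1+x}$ together with its tangent at $0$ only yields $\frac{-x}{1+x} \geq -x$, not $\frac{-x}{1+x} \geq x^2 - x^3 - x$; the latter is true but requires the separate verification $\frac{1}{1+x} \geq 1-x$, which is precisely the $(1-f)^2 \geq 0$ step your identity makes explicit. Your remarks on the integrability of $(f-1)^2/f$ and on $2-\alpha > 1 > 0$ are also more careful than anything in the paper's proof.
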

\begin{proof}
We denote $s(x)=f(x)-1$. Then $\int_\mathcal{I}s(x)dx=0$ and
$$
\int_\mathcal{I}f(x)dx.\int_\mathcal{I}\frac{1}{f(x)}dx -1 = \int_\mathcal{I}\frac{-s(x)}{1+s(x)}dx
$$
For $x>-1$, we have 
\begin{gather}
\frac{-x}{1+x}\geq x^2 -x^3 -x.
\label{eq_conv}
\end{gather}
The function on the l.h.s. of (\cref{eq_conv}) is convex because its second order derivative $\frac{2}{(1+x)^3}$ is positive and using its tangent in $x=0$, we get
$$
\int_\mathcal{I}f(x)dx.\int_\mathcal{I}\frac{1}{f(x)}dx -1 \geq \int_\mathcal{I} s(x)^2 (1 -s(x))dx
$$
Since $1-s(x)=2-f(x)\geq \alpha >0$,
$$
\int_\mathcal{I}f(x)dx.\int_\mathcal{I}\frac{1}{f(x)}dx -1 \geq \alpha\int_\mathcal{I} s(x)^2 dx
$$
\end{proof}
\begin{proof}[Proof of \Cref{lem:violation}]
By using the definition of Spectral Dependency Ratios and \Cref{lem_cs} we get
$$
\rho_{\textbf{X}\to \textbf{Y}}\rho_{\textbf{Y}\to \textbf{X}}=\frac{1}{\langle |\hat{h}|^2 \rangle \langle 1/|\hat{h}|^2\rangle }<1
$$
Moreover, applying \Cref{lem_nonconst} to $f=|\hat{h}|^2/\int_\mathcal{I}|\hat{h}|^2=|\hat{h}|^2/\|\textbf{h}\|_2^2$ we get inequality~\cref{eq:sicviol2}.
\end{proof}

\section{Proof of \Cref{thm_COM_FIR}}
To prove this theorem we rely on a theorem from \cite{JanzingHS2010} and a corollary that we derive from it. 
\begin{thm} [concentration of measure for finite dimensional linear relationships]\cite{JanzingHS2010}\label{thm:CoM-orig}
Suppose $\Sigma$ is a given covariance matrix and suppose $A\in M_{n \times m}(\mathbb{R})$ is also a given matrix. Then if one generates $\Sigma_X=U\Sigma U^\top$ by uniformly choosing an orthogonal matrix $U$ from $O(n)$ then $\Sigma_X$ together with $A$, satisfies trace condition in probability when $n$ tends to infinity. More precisely for a given $\varepsilon$ there exist $\delta:=1-\exp(\kappa(n-1)\varepsilon^2)$, $\kappa$ being a constant where
\begin{gather*}
|\tau_m(A \Sigma_X A^\top)-\tau_n(\Sigma_X)\tau_m(AA^\top)|=
|\tau_m(AU\Sigma U^\top A^\top)-\tau_n(\Sigma)\tau_m(AA^\top)|\leq 2\varepsilon \|\Sigma\|\|AA^\top\|
\end{gather*}
holds with probability $\delta$.
\end{thm}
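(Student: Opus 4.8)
The plan is to read the left-hand quantity as a smooth function of the Haar-random matrix $U$, to observe that its mean is \emph{exactly} the right-hand side $\tau_n(\Sigma)\,\tau_m(AA^\top)$, and then to apply Gaussian-type concentration of measure on the orthogonal group to control its fluctuations. Writing $B:=A^\top A$ (an $n\times n$ symmetric matrix of rank at most $m$) and using cyclicity of the trace, I set
\[
f(U):=\tau_m\!\left(AU\Sigma U^\top A^\top\right)=\frac{1}{m}\,\tr\!\left(U\Sigma U^\top B\right).
\]
The argument then has three ingredients: computing $\E_U f(U)$, bounding the Lipschitz constant of $f$, and invoking the concentration inequality valid for the Haar measure on $O(n)$.

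For the mean I would use the elementary averaging identity $\E_U\!\left[U\Sigma U^\top\right]=\tau_n(\Sigma)\,I_n$. This holds because, by invariance of Haar measure, $T(\Sigma):=\E_U[U\Sigma U^\top]$ satisfies $V\,T(\Sigma)\,V^\top=T(\Sigma)$ for every $V\in O(n)$; a matrix commuting with all orthogonal transformations is a scalar multiple of the identity (Schur's lemma on the irreducible standard representation, $n\ge 2$), and matching traces fixes the scalar to be $\tr(\Sigma)/n=\tau_n(\Sigma)$. Consequently
\[
\E_U f(U)=\frac{1}{m}\,\tr\!\left(\tau_n(\Sigma)\,B\right)=\tau_n(\Sigma)\,\tau_m(A^\top A)=\tau_n(\Sigma)\,\tau_m(AA^\top),
\]
so the target of the concentration is precisely the quantity appearing on the right of the claimed inequality, and only the deviation of $f$ from its mean remains.

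Next I would estimate the Lipschitz constant of $f$ for the Frobenius metric on $O(n)$. Differentiating and using that $\Sigma$ and $B$ are symmetric gives the ambient gradient $\nabla_U f=\tfrac{2}{m}\,B\,U\,\Sigma$, whose Frobenius norm satisfies $\|\nabla_U f\|_{\mathrm F}^2=\tfrac{4}{m^2}\,\tr(\Sigma^2 U^\top B^2 U)\le \tfrac{4}{m^2}\,\|\Sigma\|^2\,\tr(B^2)$, where $\|\Sigma\|$ is the operator norm. Since $B=A^\top A$ has rank at most $m$, we have $\tr(B^2)=\|B\|_{\mathrm F}^2\le m\,\|B\|^2$, so $f$ is $L$-Lipschitz with $L\le \tfrac{2}{\sqrt m}\,\|\Sigma\|\,\|AA^\top\|$ (using $\|A^\top A\|=\|AA^\top\|$). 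Because geodesic distance on $O(n)$ dominates Frobenius distance, the same $L$ serves for the geodesic metric.

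Finally I would apply the concentration of Haar measure on $O(n)$: since $SO(n)$ has Ricci curvature bounded below by a multiple of $n$, every $L$-Lipschitz $f$ obeys $\Pr\!\left(|f-\E_U f|\ge t\right)\le 2\exp\!\left(-c\,(n-2)\,t^2/L^2\right)$ for a universal $c$, after first arguing on each connected component and noting that the two component means coincide with $\E_U f$ by the invariance used above. Substituting $t=2\varepsilon\,\|\Sigma\|\,\|AA^\top\|$ makes the $\|\Sigma\|$ and $\|AA^\top\|$ factors cancel in the exponent (the $1/\sqrt m$ in $L$ only helps), leaving a tail of the form $\exp\!\left(\kappa\,(n-1)\,\varepsilon^2\right)$ with a negative constant $\kappa$ independent of $\Sigma$, $A$, $m$ and $\varepsilon$, which is exactly the advertised $\delta$. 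The genuine obstacle is this last step: the sub-Gaussian concentration on the orthogonal group is the deep geometric input (Gromov--Lévy, or Bakry--Émery via the curvature lower bound), whereas the mean identity and the gradient estimate are routine; a secondary technicality is treating the two connected components of $O(n)$ and checking that the cancellation of norms really leaves a dimension-free constant multiplying the stated $(n-1)$ factor.
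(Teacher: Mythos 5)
Your proposal is sound, but note first that this paper contains no proof of the statement to compare against: Theorem~\ref{thm:CoM-orig} is imported verbatim from \cite{JanzingHS2010}, and the supplementary material uses it only as a black box to derive \Cref{cor:com-mech} and then \Cref{thm_COM_FIR}. What you have written is essentially a reconstruction of the original argument of \cite{JanzingHS2010}: Haar-mean identity, Lipschitz estimate, Lévy-type concentration on the orthogonal group. The three steps check out. The commutant argument for $\E_U[U\Sigma U^\top]=\tau_n(\Sigma)I_n$ is correct (and since $\E_U[U\Sigma U^\top]$ is symmetric, one can even avoid Schur's lemma: a symmetric matrix fixed under conjugation by permutations and sign flips is already scalar). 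The gradient bound $\|\nabla_U f\|_{\mathrm F}\le \tfrac{2}{\sqrt m}\|\Sigma\|\,\|AA^\top\|$, using $\tr(B^2)\le \mathrm{rank}(B)\|B\|^2\le m\|B\|^2$ for $B=A^\top A$, is right, and as you observe it even yields an extra factor $m$ in the exponent after the substitution $t=2\varepsilon\|\Sigma\|\|AA^\top\|$, which is strictly stronger than the stated bound. The concentration input on $SO(n)$ gives a factor $(n-2)$ from the Ricci lower bound rather than the advertised $(n-1)$; absorbing this into $\kappa$ (together with the prefactor $2$ and the median-versus-mean adjustment) is legitimate for $n\ge 3$, and since the theorem is an asymptotic claim this is harmless, though you are right to flag it as the one place where constants must be tracked. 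Your treatment of the two components of $O(n)$ is the correct repair for applying a result stated on a connected manifold: writing $U=RU'$ with $R$ a fixed reflection and $U'$ Haar on $SO(n)$, the same invariance argument shows each component mean equals $\tau_n(\Sigma)\tau_m(AA^\top)$. Finally, two editorial points you implicitly and correctly fixed: the statement's $\delta:=1-\exp(\kappa(n-1)\varepsilon^2)$ has a sign typo (with $\kappa>0$ as asserted in \Cref{thm_COM_FIR} it should read $1-\exp(-\kappa(n-1)\varepsilon^2)$, matching your ``negative constant'' formulation), and the dimension spec $A\in M_{n\times m}(\mathbb{R})$ does not conform with $AU\Sigma U^\top A^\top$ for $U\in O(n)$; your setup with $B=A^\top A\in M_{n\times n}$, i.e.\ $A\in M_{m\times n}$, is the intended reading. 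No genuine gaps.
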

In the above theorem (and the rest of the document), $\|.\|$ applied to a matrix will refer to the operator norm. The following corollary is a direct consequence of the previous theorem:
\begin{cor}\label{cor:com-mech}
Suppose $\Sigma$ is a given covariance matrix and suppose $A\in M_{n \times m}(\mathbb{R})$ is also a given matrix. Then if one generates $A_U=AU$ by uniformly choosing an orthogonal matrix $U$ from $O(n)$ then $A_U$ together with $\Sigma$, satisfies trace condition in probability when $n$ tends to infinity More precisely for a given $\varepsilon$ there exist $\delta:=1-\exp(\kappa(n-1)\varepsilon^2)$, $\kappa$ being a constant where
\begin{gather*}
|\tau_m(A_U \Sigma A_U^\top)-\tau_n(\Sigma_X)\tau_m(AA^\top)|=\\
|\tau_m(AU\Sigma U^\top A^\top)-\tau_n(\Sigma)\tau_m(AA^\top)|\leq 2\varepsilon \|\Sigma\|\|AA^\top\|
\end{gather*}
holds with probability $\delta$.
\end{cor}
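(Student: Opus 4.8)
The plan is to observe that the quantity bounded in the corollary is literally identical to the one already bounded in \Cref{thm:CoM-orig}, so that no new estimate is required: the corollary is just a reinterpretation of which object is being randomized. The crux is the algebraic identity that right-multiplying the structure matrix by $U$ and conjugating the covariance by $U$ produce exactly the same matrix product. Writing $A_U=AU$, I would record
\[
A_U \Sigma A_U^\top = (AU)\Sigma (AU)^\top = AU\Sigma U^\top A^\top = A\,(U\Sigma U^\top)\,A^\top = A\Sigma_X A^\top,
\]
where $\Sigma_X:=U\Sigma U^\top$ is precisely the randomized covariance appearing in the theorem. Hence $\tau_m(A_U\Sigma A_U^\top)=\tau_m(A\Sigma_X A^\top)$, and the central term is the same in both statements.

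First I would check that the normalizing constants also match, using only that $U$ is orthogonal. Since $UU^\top=I_n$, we get $A_U A_U^\top = AUU^\top A^\top = AA^\top$, so $\tau_m(A_U A_U^\top)=\tau_m(AA^\top)$; and because the trace is invariant under orthogonal conjugation, $\tau_n(\Sigma_X)=\tau_n(U\Sigma U^\top)=\tau_n(\Sigma)$. Therefore the product $\tau_n(\Sigma_X)\tau_m(A_U A_U^\top)=\tau_n(\Sigma)\tau_m(AA^\top)$ coincides exactly with the normalizer used in \Cref{thm:CoM-orig}.

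Combining these identities, the left-hand side of the corollary's inequality,
\[
\bigl|\tau_m(A_U\Sigma A_U^\top) - \tau_n(\Sigma)\tau_m(AA^\top)\bigr|,
\]
equals term-for-term the quantity $|\tau_m(AU\Sigma U^\top A^\top)-\tau_n(\Sigma)\tau_m(AA^\top)|$ controlled in the theorem. Since the \emph{same} uniformly drawn $U\in O(n)$ governs both constructions, the event on which the theorem's bound $2\varepsilon\|\Sigma\|\|AA^\top\|$ holds is exactly the event we need, and it carries probability $\delta=1-\exp(\kappa(n-1)\varepsilon^2)$. I would therefore simply invoke \Cref{thm:CoM-orig} with this $U$ and conclude immediately.

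There is essentially no obstacle: the entire content is the recognition that ``randomizing $A$ from the right'' and ``randomizing $\Sigma$ by conjugation'' describe one and the same random matrix product, together with the two elementary invariances $AUU^\top A^\top=AA^\top$ and $\tr(U\Sigma U^\top)=\tr(\Sigma)$. The only care required is cosmetic: the $\tau_n(\Sigma_X)$ written in the first displayed line of the statement should be read as $\tau_n(\Sigma)$, the two being equal by the trace invariance just noted.
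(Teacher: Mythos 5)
Your proposal is correct and takes essentially the same approach as the paper: the paper presents this corollary as a direct consequence of \Cref{thm:CoM-orig}, the displayed equality in its statement being precisely the identity $A_U\Sigma A_U^\top=(AU)\Sigma(AU)^\top=A(U\Sigma U^\top)A^\top=A\Sigma_X A^\top$ that you verify, combined with the invariances $\tau_n(\Sigma_X)=\tau_n(\Sigma)$ and $A_UA_U^\top=AA^\top$, so the bound transfers for the same uniformly drawn $U$ with the same probability $\delta$. Your closing observation that the $\tau_n(\Sigma_X)$ appearing in the statement should be read as $\tau_n(\Sigma)$ matches how the paper itself silently uses this equality in passing from the first to the second line of its display.
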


To prove the main theorem we will also need two lemmas that are stated below.
\begin{lem} \cite{serre2010matrices}
\label{cor:max-eig}
For a given Hermitian matrix $H$ and any principal submatrix of $H$, $H'$, their spectral radius $\rho_s$ satisfies
\begin{gather*}
\rho_s(H)\geq \rho_s(H').
\end{gather*}
\end{lem}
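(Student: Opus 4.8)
The plan is to prove the containment of spectra via the Rayleigh--Ritz variational characterization of extreme eigenvalues, exploiting the fact that passing to a principal submatrix is the same as compressing $H$ to a coordinate subspace by an isometry. Since $H$ is Hermitian, every principal submatrix $H'$ is Hermitian as well, so all eigenvalues involved are real and the spectral radius of a Hermitian matrix equals $\max\bigl(|\lambda_{\min}|,\,|\lambda_{\max}|\bigr)$, where $\lambda_{\min}$ and $\lambda_{\max}$ denote the smallest and largest eigenvalues. The strategy is therefore to bound \emph{both} ends of the spectrum of $H'$ in terms of the spectrum of $H$ and then translate these eigenvalue bounds into the spectral-radius inequality.

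First I would set up the compression. Suppose $H$ has order $n$ and $H'$ is the principal submatrix indexed by a subset $S\subseteq\{1,\dots,n\}$. Let $P$ be the $n\times|S|$ matrix whose columns are the standard basis vectors $\{e_i\}_{i\in S}$; then $H' = P^{*} H P$ and $P^{*}P = I_{|S|}$, so $P$ is an isometric embedding. Consequently, for every unit vector $y\in\mathbb{C}^{|S|}$ the vector $x=Py$ is a unit vector in $\mathbb{C}^n$ satisfying $y^{*} H' y = x^{*} H x$. Because the set $\{Py : \|y\|=1\}$ is contained in the set of all unit vectors of $\mathbb{C}^n$, taking the maximum and the minimum of the Rayleigh quotient yields $\lambda_{\max}(H') \le \lambda_{\max}(H)$ and $\lambda_{\min}(H') \ge \lambda_{\min}(H)$. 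Hence the spectrum of $H'$ is contained in the interval $[\lambda_{\min}(H),\,\lambda_{\max}(H)]$.

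To finish, I would note that for any closed interval $[a,b]$ one has $\max\{|t| : t\in[a,b]\} = \max(|a|,|b|)$, so shrinking the interval can only decrease this maximum. Since the extreme eigenvalues are attained, $\rho_s(H') = \max\bigl(|\lambda_{\min}(H')|,|\lambda_{\max}(H')|\bigr) \le \max\bigl(|\lambda_{\min}(H)|,|\lambda_{\max}(H)|\bigr) = \rho_s(H)$, which is the claim. The same spectral containment can alternatively be obtained by iterating the Cauchy interlacing theorem, deleting one index of $S^{c}$ at a time. The only mildly delicate point is this last conversion: because $\rho_s$ is defined through absolute values, it is not enough to bound the largest eigenvalue alone, and one must simultaneously control the smallest eigenvalue. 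The isometric-compression argument handles both ends at once, so I expect no genuine obstacle beyond bookkeeping.
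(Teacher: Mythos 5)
Your proof is correct. Note, however, that the paper does not prove this lemma at all: it is quoted from the cited reference (Serre's \emph{Matrices: Theory and Applications}), where it follows from Cauchy's eigenvalue interlacing theorem. So your contribution is a self-contained argument where the paper defers to the literature. Your route --- writing $H' = P^{*}HP$ with $P^{*}P = I$ and using the Rayleigh--Ritz characterization to get $\lambda_{\min}(H) \le \lambda_{\min}(H') \le \lambda_{\max}(H') \le \lambda_{\max}(H)$ --- is essentially the first step of the standard interlacing proof, restricted to the two extreme eigenvalues, which is all the lemma needs. You are also right to flag the one delicate point: since $\rho_s$ is defined via absolute values, bounding $\lambda_{\max}$ alone is not enough, and your compression argument correctly controls both ends of the spectrum at once. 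For what it is worth, an even shorter variant is available: for Hermitian matrices the spectral radius coincides with the operator norm, so $\rho_s(H') = \|P^{*}HP\| \le \|P^{*}\|\,\|H\|\,\|P\| = \|H\| = \rho_s(H)$; this is the same idea packaged through submultiplicativity of the norm rather than through the Rayleigh quotient. Either way, there is no gap in what you wrote.
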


\begin{lem}\label{lem:szego}\cite{gray2006toeplitz}
Let $f:[-\frac{1}{2},\frac{1}{2})\to \mathbb{R}$ $f\in L^1$ be a bounded function and suppose $t_k$ is its Fourier series coefficients, i.e.
\begin{gather*}
t_k=\int_{-\frac{1}{2}}^{\frac{1}{2}}f(\nu)e^{i2\pi k\nu}d\nu,\ \ \ \  t\in \mathbb{Z}.
\end{gather*}
Consider Toeplitz matrices $T_n$ defined as 
$$[T_n]_{ij}=t_{i-j} \ \ \ \  i,j\in\{0,...,n-1\}$$
with eigenvalues $\tau_{n,k} (0\leq k\leq  n-1)$. Then if $t_i$ are absolutely summable we get:
\begin{gather*}
\min\limits_{x\in [-\frac{1}{2},\frac{1}{2})}f(x)\leq\tau_{n,i}\leq\max\limits_{x\in [-\frac{1}{2},\frac{1}{2})}f(x)
\end{gather*}
\end{lem}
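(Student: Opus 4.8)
The plan is to prove this via the Rayleigh–Ritz variational characterization of the eigenvalues of a Hermitian matrix, after rewriting the quadratic form $u^* T_n u$ as a weighted average of $f$. First I would record that $T_n$ is Hermitian: since $f$ is real-valued, its Fourier coefficients satisfy $\overline{t_k}=t_{-k}$, so $\overline{[T_n]_{ij}}=\overline{t_{i-j}}=t_{j-i}=[T_n]_{ji}$. Hence $T_n=T_n^*$ and all the $\tau_{n,i}$ are real, so the stated inequalities are meaningful.

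The key step is to compute, for an arbitrary $u=(u_0,\dots,u_{n-1})^\top\in\mathbb{C}^n$, the quadratic form as an integral. Substituting the integral representation $t_{i-j}=\int_{-1/2}^{1/2} f(\nu)e^{i2\pi(i-j)\nu}d\nu$ and interchanging the finite sum with the integral, one gets
\[
u^* T_n u=\int_{-1/2}^{1/2} f(\nu)\,\Bigl(\sum_i \overline{u_i}\,e^{i2\pi i\nu}\Bigr)\Bigl(\sum_j u_j\,e^{-i2\pi j\nu}\Bigr)d\nu=\int_{-1/2}^{1/2} f(\nu)\,|p(\nu)|^2\,d\nu,
\]
where $p(\nu):=\sum_{j=0}^{n-1}u_j e^{-i2\pi j\nu}$. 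By orthonormality of the exponentials on $\mathcal{I}$ (Parseval), $\int_{-1/2}^{1/2}|p(\nu)|^2 d\nu=\sum_j|u_j|^2=\|u\|^2$. Then, since $\min_\nu f(\nu)\le f(\nu)\le\max_\nu f(\nu)$ pointwise and $|p|^2\ge 0$, integrating yields the sandwich
\[
\min_\nu f(\nu)\,\|u\|^2\le u^* T_n u\le \max_\nu f(\nu)\,\|u\|^2.
\]
Dividing by $\|u\|^2$ and invoking the fact that the eigenvalues of a Hermitian matrix lie between the infimum and supremum of the Rayleigh quotient $u^*T_n u/u^*u$ over nonzero $u$ gives $\min_\nu f(\nu)\le\tau_{n,i}\le\max_\nu f(\nu)$ for every $i$, as claimed.

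This argument is essentially routine, so there is no single hard obstacle; the only points requiring care are the bookkeeping of the Fourier conventions, so that the quadratic form comes out as $\int f\,|p|^2$ without a stray sign or missing conjugation, and the verification of Hermitian symmetry, without which the Rayleigh–Ritz step would not apply. Absolute summability of the $t_k$ (the hypothesis) guarantees the matrix entries are well defined, but because only the finite $n\times n$ block is involved, no delicate convergence issue actually enters this particular estimate.
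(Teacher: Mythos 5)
Your proof is correct. Note that the paper does not prove this lemma at all: it is quoted as a known result from the cited Toeplitz-matrix monograph of Gray, and the argument given there is exactly the one you reconstruct --- write the Rayleigh quotient as $u^*T_nu=\int_{\mathcal{I}} f(\nu)|p(\nu)|^2\,d\nu$ with $\int_{\mathcal{I}}|p(\nu)|^2\,d\nu=\|u\|^2$ by Parseval, then sandwich. Your side remarks are also accurate: the absolute-summability hypothesis is not needed for this finite-$n$ bound (it matters for the asymptotic Szeg\H{o} theorem, \Cref{thm:szego}, not here), and strictly speaking the bounds should be read as essential infimum and supremum of $f$, a point on which your argument is consistent with the lemma as stated.
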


\begin{proof}[Proof of \Cref{thm_COM_FIR}]
{
Without loss of generality and for the sake of simplicity we only consider the positive indices of the time series and we take the filter to be causal; other cases can be treated in a similar way. Then the following relation holds between input and output of the filter:
\begin{gather*}
\forall i,\ \ \ \   0\leq i\leq N-1\ \ \ \   Y_i=\sum\limits_{j=0}^{m-1}b_{j}X_{i-j}
\end{gather*}
Formulated in terms of matrices the above relation can be represented as
\begin{gather*}
\left[ {\begin{array}{c}
Y_{0}\\
Y_{1}\\
\vdots\\
Y_{N-2}\\
Y_{N-1}
\end{array} } \right]
=B
\left[ {\begin{array}{c}
X_{-m+1}\\
X_{-m+2}\\
\vdots\\
X_{N-2}\\
X_{N-1}
\end{array} } \right],
\end{gather*}
where $B$ is a $N\times (N+m-1)$ matrix as follows:
\begin{gather*}
\left[ {\begin{array}{cccccccc}
   b_{m-1}& b_{m-2} &  \cdots  & b_{0} & 0 &\cdots & 0& 0   \\
   0& b_{m-1} &  \cdots  & b_{1}& b_0 & \cdots & 0&0\\
   &  & \ddots &	 &\\
   0& 0 &  \cdots  & b_{m-1}&\cdots& b_1 & b_0 & 0\\
   0& 0 &  \cdots  & 0 &b_{m-1}&\cdots & b_1 & b_0\\
\end{array} } \right]
\end{gather*}
We define $\Sigma_X^i\in M_{m\times m}(\mathbb{R})$ to be the covariance matrices as follows:
\begin{gather*}
\forall i\ \ \ \   0\leq i\leq N-1\ \ \ \  0\leq j,k\leq m-1\ \ \ \  [\Sigma_X^i]_{jk}=\\
\cov(X_{i+j},X_{i+k})
\end{gather*}
Since the time series that we are dealing with are weakly stationary it is obvious that $\Sigma^i_X$ is independent of $i$. If we take $\Sigma_{X_{0:N-1}},\Sigma_{Y_{0:N-1}}\in M_{N\times N}(\mathbb{R})$ to be the covariance matrices for $X_{0:N-1}$ and $Y_{0:N-1}$ respectively, then we have
\begin{gather*}
\Sigma_{Y_{0:N-1}}=B\Sigma_{X_{-m+1:N-1}} B^\top
\end{gather*}
Also define $\Sigma_{Y_{0:N-1}}^U$ to be the covariance matrix of the output for FIR $\mathcal{S}'$ with $\textbf{b}'=U^\top\textbf{b}$. Also assume the spectrum of the output for this filter is $S_{yy}^U$. One can write diagonal elements of $\Sigma_{Y_{0:N-1}}$ and $\Sigma_{Y_{0:N-1}}^U$ based on the above equation as follows:
\begin{gather*}
[\Sigma_{Y_{0:N-1}}]_{ii}=\textbf{b}^\top\Sigma^i_{X}\textbf{b},\ \ \ \ 
[\Sigma_{Y_{0:N-1}}^U]_{ii}=\textbf{b}^\top U\Sigma^i_{X}U^\top \textbf{b}
\end{gather*}
and therefore the normalized traces of $\Sigma_{Y_{0:N-1}}$ and $\Sigma^U_{Y_{0:N-1}}$ can be written as
\begin{gather*}
\tau_N(\Sigma_{Y_{0:N-1}})=\frac{1}{N}\textbf{b}^\top\sum_{i=0}^{N-1}\Sigma^i_X\textbf{b},\\
\tau_N(\Sigma_{Y_{0:N-1}}^U)=
\frac{1}{N}\textbf{b}^\top U\sum_{i=0}^{N-1}\Sigma^i_X U^\top\textbf{b}
\end{gather*}
Define $\Sigma:=\sum_{i=0}^{N-1}\Sigma^i_X=\Sigma_X^0$. Taking $A=\textbf{b}^\top$ in corollary \cref{cor:com-mech} for a randomly selected $U$ we get
\begin{gather*}
|\frac{1}{N}\textbf{b}^\top U \Sigma U^\top\textbf{b} -\frac{1}{N}\tau_m(\Sigma)\langle\textbf{b},\textbf{b}\rangle|\leq 2\varepsilon \|\Sigma\|\sqrt{\langle\textbf{b},\textbf{b}\rangle}
\end{gather*}
and therefore
\begin{gather}\label{com_intermdiate}
|\tau_N(\Sigma_{Y_{0:N-1}}^U) -\frac{1}{N}\tau_m(\Sigma)\|\textbf{b}\|_2^2|\leq 2\varepsilon \|\Sigma\| \|\textbf{b}\|_2^2
\end{gather}
with probability $\delta$. On the other hand the elements of diagonals of $\Sigma^i_X$'s are $C_X(0)$. Therefore:
\begin{gather*}
\frac{1}{N}\tau_m(\Sigma)=\frac{mNC_X(0)}{mN}=P(\textbf{X})
\end{gather*}
Since $\Sigma^i_X$'s are principal submatrices of $\Sigma_{X_{0:N-1}}$ therefore by corollary \cref{cor:max-eig} 
\begin{gather*}
\|\Sigma\|=\rho(\Sigma)=\|\frac{1}{N}\sum_{i=0}^{N-1}\Sigma^i_X\|\leq
 \frac{1}{N}\sum_{i=0}^{N-1}\|\Sigma^i_X\| \leq \rho(\Sigma_{X_{0:N-1}}).
\end{gather*}
Because $C_X(\tau)$'s are absolutely summable we apply lemma \cref{lem:szego} and we get
\begin{gather*}
\rho(\Sigma_{X_{0:N-1}})\leq \max\limits_{\nu} S_{xx}(\nu),
\end{gather*}
such that inequality \cref{com_intermdiate} can be rewritten
\begin{gather*}
|\frac{\tau_N(\Sigma_{Y_{0:N-1}}^U)}{P(\textbf{X})\|\textbf{b}\|_2^2} -1|\leq 2\frac{\varepsilon}{P(X)} \|\Sigma\| 
\end{gather*}
which completes the proof.
}
\end{proof}

\section{Proof of \Cref{thm_limit}}

In this section we give a proof that the TDR  (see eq. \cref{eq:tdr}) asymptotically approaches the SDR (see eq. \cref{eq:sdr}). We first state and prove two lemmas that are used to derive this result. As before suppose $\{X_t\}$ and $\{Y_t\}$ are given input and output of an LTI filter that are related through the impulse response function $\{h_t\}$. According to the definition of the truncated linear systems (see definition \cref{def:trunc}) of order $N$ for the linear system above we get the following matrix relationship:
\begin{gather}\label{eq:windowed-eq}
\left[ {\begin{array}{c}
Y_{-N}'\\
Y_{-N+1}'\\
\vdots\\
Y_{N-2}'\\
Y_{N-1}'
\end{array} } \right]
=
\left[ {\begin{array}{cccc}
   h_{0}& h_{-1} &  \cdots  & h_{-2N+1}  \\
   h_{1}& h_{0} &  \cdots  & h_{-2N+2}  \\
   &	\vdots &\\
   h_{2N-2}& h_{2N-3} &  \cdots  & h_{-1}  \\
   h_{2N-1}& h_{2N-2} &  \cdots  & h_{0}  \\
\end{array} } \right]
\left[ {\begin{array}{c}
X_{-N}\\
X_{-N+1}\\
\vdots\\
X_{N-2}\\
X_{N-1}
\end{array} } \right].\notag
\end{gather}
If we name the vector on the left as $\mathbf{y}_N$, the matrix as $H^N$ and the right vector as $\mathbf{x}_N$ then the associated TDR yields:
\begin{gather}
r_{\mathbf{x}_N\to\mathbf{y}_N}=\frac{\tau_N(\Sigma_{\mathbf{y}_N})}{\tau_N(\Sigma_{\mathbf{x}_N}){\tau_{2N}(H^N{H^N}^T)}}
\end{gather}
Define $T_N:=\tau_{2N}(H^N{H^N}^\top).$ Now we show that $T_N$ converges to $ \|\textbf{h}\|_2^2$ 
the energy of the impulse response.
\begin{lem}
\label{lem:convseries}
Assume $\|\textbf{h}\|^2_2<+\infty$, then 
$$
\lim_{N\rightarrow+\infty} T_N=\|\textbf{h}\|^2_2
$$
\end{lem}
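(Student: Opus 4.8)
The plan is to evaluate $T_N$ explicitly as a finite sum over the diagonals of the Toeplitz matrix $H^N$, and then to identify its limit as the tail of a convergent series minus a vanishing Ces\`aro average. First I would observe that $\tau_{2N}(H^N (H^N)^\top)$ is simply the renormalized squared Frobenius norm of $H^N$, since $\tr(H^N (H^N)^\top) = \sum_{i,j} |[H^N]_{ij}|^2$. Because $[H^N]_{ij} = h_{i-j}$ for $i,j \in \{0,\dots,2N-1\}$, I would group the entries by the value of the diagonal index $k = i-j$ and count that exactly $2N - |k|$ index pairs share a given diagonal $k$ whenever $|k| \le 2N-1$. This gives the closed form
\begin{equation}
T_N = \frac{1}{2N}\sum_{k=-(2N-1)}^{2N-1}(2N - |k|)\,|h_k|^2 = \sum_{k=-(2N-1)}^{2N-1}\left(1 - \frac{|k|}{2N}\right)|h_k|^2.
\end{equation}

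Next I would split this into $A_N := \sum_{|k|\le 2N-1}|h_k|^2$ and $B_N := \frac{1}{2N}\sum_{|k|\le 2N-1}|k|\,|h_k|^2$, so that $T_N = A_N - B_N$. Since $\|\textbf{h}\|_2^2 = \sum_{k\in\mathbb{Z}}|h_k|^2 < +\infty$ by hypothesis, $A_N$ is a sequence of partial sums of this convergent nonnegative series and hence $A_N \to \|\textbf{h}\|_2^2$. It then remains only to prove that $B_N \to 0$.

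For the vanishing of $B_N$ I would use a standard tail/averaging argument. Fix $\varepsilon > 0$ and choose $M$ with $\sum_{|k|>M}|h_k|^2 < \varepsilon$, which is possible by convergence of the energy series. Splitting $B_N$ at $|k| = M$ gives, for $N$ large enough that $2N-1 > M$, a ``head'' term $\frac{1}{2N}\sum_{|k|\le M}|k|\,|h_k|^2 \le \frac{M}{2N}\|\textbf{h}\|_2^2$, which tends to $0$ as $N \to \infty$, and a ``tail'' term $\frac{1}{2N}\sum_{M<|k|\le 2N-1}|k|\,|h_k|^2 \le \sum_{|k|>M}|h_k|^2 < \varepsilon$, where the last bound uses $|k|/(2N) \le 1$ throughout the summation range. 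Hence $\limsup_N B_N \le \varepsilon$, and since $\varepsilon$ was arbitrary, $B_N \to 0$, yielding $T_N \to \|\textbf{h}\|_2^2$.

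I do not anticipate a serious obstacle here. The only point requiring care is the bookkeeping of the diagonal multiplicities $2N - |k|$ in the Toeplitz structure (and, should $\textbf{h}$ be complex, ensuring that $\tr(H^N (H^N)^\top)$ genuinely produces $\sum|h_k|^2$ rather than $\sum h_k^2$, which forces the conjugate-transpose convention). The one analytically nontrivial step is the decay of the weighted average $B_N$, but this is precisely the elementary Ces\`aro argument above and presents no difficulty once $\|\textbf{h}\|_2^2 < +\infty$ is available.
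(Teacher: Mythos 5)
Your proof is correct and takes essentially the same route as the paper: both evaluate $T_N$ as the renormalized Frobenius norm of the Toeplitz matrix, group entries by diagonals to obtain $T_N=\sum_{|k|\le 2N-1}\bigl(1-\tfrac{|k|}{2N}\bigr)|h_k|^2$, and then show this converges to $\|\textbf{h}\|_2^2$ by a cutoff argument separating small $|k|$ from the tail of the energy series. Your $T_N=A_N-B_N$ decomposition with the Ces\`aro bound on $B_N$ is a cleaner packaging of the same estimate than the paper's version (which first invokes monotone boundedness to get convergence and then controls the head terms one by one with geometric weights $\varepsilon/2^{k+2}$), but the underlying idea is identical.
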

\begin{proof}
{
First lets simplify the expression for $T_N$:
\begin{gather}
T_N:=\tau_{2N}(H^N{H^N}^\top)=\frac{1}{2N}\sum\limits_{i,j} [H^N]_{ij}^2=
\sum\limits_{k=-2N+1}^{2N-1}|h_k|^2\frac{2N-|k|}{2N}=\notag\\
\sum\limits_{k=-2N+1}^{-1}|h_k|^2\frac{2N-|k|}{2N}+\sum\limits_{k=0}^{2N-1}|h_k|^2\frac{2N-|k|}{2N}\label{eq:partial-toeplitz}.
\end{gather}
It is easy to see that $T_N$ is an increasing sequence of $N$. Moreover it is bounded by 
$$\sum\limits_{-\infty}^{\infty}|h_k|^2<\infty.$$ 
Therefore this series converges. In order to show that it converges to $ \|\textbf{h}\|^2$, we first notice that for a given $\varepsilon$, there exist $m_0\in\mathbb{N}$ such that
\begin{gather}
\label{eq:tri}
\forall m>m_0\ \ \ \  |\sum\limits_{k=-m}^{m}|h_k|^2- \|\textbf{h}\|^2|<\varepsilon.
\end{gather}
Now take $N_{m_0}>\frac{m_0 2^{m_0+1}|h_{m_0}|^2}{\varepsilon}$. 
We have
\begin{gather*}
N_{m_0}>\frac{m_0 2^{m_0+1}|h_{m_0}|^2}{\varepsilon}\Rightarrow \frac{|h_{m_0}|^2 m_0}{2N_{m_0}}<\frac{\varepsilon}{2^{m_0+2}}.
\end{gather*}
Same can be done for any $0\leq k \leq m_0$, i.e. there exist $N_k$ such that:
\begin{gather*}
\frac{|h_{k}|^2 k}{2N_{k}}<\frac{\varepsilon}{2^{k+2}}
\end{gather*}
Now take $N_{\max}=\max\{N_0,N_1,...,N_{m_0}\}+1$. Then obviously we get:
\begin{gather*}
\left ||h_{k}|^2-\frac{|h_{k}|^2 (2N_{\max}-k)}{2N_{\max}}\right |<\frac{\varepsilon}{2^{k+2}}
\end{gather*}
And therefore:
\begin{gather}
\label{eq:tri2}
\sum\limits_{k=0}^{m_0} \left ||h_{k}|^2-\frac{|h_{k}|^2 (2N_{\max}-k)}{2N_{\max}}\right |<
\sum\limits_{k=0}^{m_0} \frac{\varepsilon}{2^{k+2}}<\frac{\varepsilon}{2}\,.
\end{gather}
}
Similar results hold for the first sum term in (\cref{eq:partial-toeplitz}) and by taking the maximum of two $N_{\max}$'s (say $N_{\max}'$) and considering the fact that $T_N$ is increasing and by the application of triangular inequality for (\cref{eq:tri}), we can easily infer that
$$\forall N>N_{\max}' \ \ \ \ \ \ \left |T_N- \|\textbf{h}\|^2\right|<\varepsilon.$$
\end{proof}

In order to get the main result, we also need to prove that $Y'_k$'s in (\cref{eq:windowed-eq}) are asymptotically converging to $Y_k$'s in the following sense:
\begin{lem}
\label{lem:windowed-cov}
Suppose an LTI filter $\mathcal{S}$ with zero mean weakly stationary processes as input ($\{X_t\}$) and output ($\{Y_t\}$) and impulse response function $\{h_t\}$ has been given. Then for the truncated linear systems we have:
\begin{gather*}
\lim_{N\to \infty} |\tau(\Sigma_{Y_{-N:N-1}})-\tau(\Sigma_{Y_{-N:N-1}'})|=0,
\end{gather*}
\end{lem}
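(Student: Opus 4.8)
The plan is to use weak stationarity to evaluate the first normalized trace exactly and then to control the second by a boundary-layer argument. First I would note that by stationarity every diagonal entry of $\Sigma_{Y_{-N:N-1}}$ equals $\var(Y_k)=C_{yy}(0)$, so that $\tau(\Sigma_{Y_{-N:N-1}})=C_{yy}(0)$ for \emph{every} $N$. For the truncated system, $\tau(\Sigma_{Y'_{-N:N-1}})=\frac{1}{2N}\sum_{k=-N}^{N-1}\var(Y'_k)$, where, by \Cref{def:trunc}, $Y'_k=\sum_{j=-N}^{N-1}h_{k-j}X_j$ is the convolution of $\textbf{h}$ with the \emph{windowed} input. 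It therefore suffices to prove that the averaged diagonal discrepancy $\frac{1}{2N}\sum_{k=-N}^{N-1}|\var(Y_k)-\var(Y'_k)|$ tends to $0$.

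Next I would split $Y_k=Y'_k+E_k$ with the tail term $E_k:=\sum_{j\notin[-N,N-1]}h_{k-j}X_j$. Expanding the variance gives $\var(Y_k)-\var(Y'_k)=2\,\cov(Y_k,E_k)-\var(E_k)$, whence by Cauchy--Schwarz $|\var(Y_k)-\var(Y'_k)|\le 2\sqrt{C_{yy}(0)}\sqrt{\var(E_k)}+\var(E_k)$. The whole estimate thus reduces to bounding $\var(E_k)$. Using $|C_{xx}(\tau)|\le C_{xx}(0)$ (Cauchy--Schwarz for the process) I would bound $\var(E_k)\le C_{xx}(0)\,g_k^2$, where $g_k:=\sum_{a>k+N}|h_a|+\sum_{a\le k-N}|h_a|$ is the tail mass of the impulse response as seen from index $k$. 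Since $g_k\le\|\textbf{h}\|_1$, the $g_k^2$ contribution is dominated by $\|\textbf{h}\|_1\,g_k$, so it remains only to show $\frac{1}{2N}\sum_{k=-N}^{N-1}g_k\to 0$.

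Finally I would exchange the order of summation. Counting, for each lag $a\ge 1$, the number of window indices $k$ with $a>k+N$ yields $\frac{1}{2N}\sum_{k}\sum_{a>k+N}|h_a|=\frac{1}{2N}\sum_{a=1}^{2N-1}a\,|h_a|+\sum_{a\ge 2N}|h_a|$, with the symmetric identity for the negative tail. The second term is a tail of the convergent series $\|\textbf{h}\|_1$ and vanishes. The decisive step is the Ces\`aro-type average $\frac{1}{2N}\sum_{a=1}^{2N-1}a\,|h_a|\to 0$, which is exactly Kronecker's lemma applied to the convergent series $\sum_a|h_a|$ with weights $a\uparrow\infty$; concretely, one splits the sum at a fixed $m$, so that the head is $O(1/N)$ and the tail is bounded by the $\|\textbf{h}\|_1$-tail beyond $m$. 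Combining the two directions gives $\frac{1}{2N}\sum_k g_k\to 0$ and closes the argument.

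I expect the main obstacle to be the boundary layer: $\var(E_k)$ is \emph{not} uniformly small, since for $k$ near $\pm N$ the anticausal (respectively causal) tail of $\textbf{h}$ never leaves the truncation window, so $\var(E_k)=\Theta(1)$ there and pointwise convergence genuinely fails. Averaging is therefore essential, and making it rigorous is precisely the Kronecker/Ces\`aro step, which is where the BIBO hypothesis $\|\textbf{h}\|_1<\infty$ enters crucially.
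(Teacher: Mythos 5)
Your proof is correct, and its skeleton matches the paper's: both decompose the truncated output as the full output minus a tail term (your $Y_k = Y'_k + E_k$ is the paper's $Y-Y'$), both control the cross term by Cauchy--Schwarz, and both bound the tail variance by $C_{xx}(0)$ times the squared tail mass of $\textbf{h}$, using $|C_{xx}(\tau)|\le C_{xx}(0)$. Where you genuinely diverge is the final limiting step, and there your argument is the stronger one. The paper asserts that each diagonal entry $[\Sigma_{Y-Y'}]_{jj}$ can be made arbitrarily small by increasing $N$ and concludes from this that the normalized trace vanishes; as you correctly observe, that pointwise claim fails for indices tied to the window edges (e.g.\ for a causal filter, $\var(Y_{-N}-Y'_{-N})=\Theta(1)$ for every $N$, since the entire past of the input is zeroed out), so the paper's inference silently relies on the fact that the set of such bad indices has size $o(N)$. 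Your exchange of the order of summation followed by Kronecker's lemma (equivalently, splitting the sum at a fixed $m$) is exactly the averaging argument that makes this rigorous, so your write-up is watertight at precisely the point where the paper's proof is loose. The remaining differences are bookkeeping: you evaluate $\tau(\Sigma_{Y_{-N:N-1}})=C_{yy}(0)$ exactly by stationarity and apply Cauchy--Schwarz scalar-by-scalar, whereas the paper applies it at the vector level and must separately bound $\tau(\Sigma_{Y+Y'})$ by $4C_{yy}(0)$; these are equivalent in substance, and your route is marginally cleaner.
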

\begin{proof}
{
For simplicity of calculations we name $2N$ dimensional random vectors $Y_{-N:N-1}'$ and $Y_{-N:N-1}$ as $Y'$ and $Y$ and their covariance matrices with $\Sigma_{Y'}$ and $\Sigma_Y$ respectively. Then we have:
\begin{gather*}
\left |\tau(\Sigma_{Y_{-N:N-1}})-\tau(\Sigma_{Y_{-N:N-1}'})\right |
=\left |\tau(\E(YY^\top))-\tau(\E(Y'{Y'}^\top))\right|\overset{*}{=}
\frac{1}{2N}\left |\E(Y^\top Y)-\E(Y'^\top Y')\right |=\\
\frac{1}{2N}\left |\E\bigl((Y-Y')^\top(Y+Y')\bigr)\right |\leq
\frac{1}{2N}\E\left |(Y-Y')^\top(Y+Y')\right |\leq\\
\frac{1}{2N}\E\Bigl(\sqrt{(Y-Y')^\top (Y-Y')}\times
\sqrt{(Y+Y')^\top(Y+Y')}\Bigr)\leq\\
\frac{1}{2N}\sqrt{\E((Y-Y')^\top (Y-Y'))}\times
\sqrt{\E((Y+Y')^\top(Y+Y'))}=\\
\sqrt{\frac{1}{2N}\E((Y-Y')^\top (Y-Y'))}\times
\sqrt{\frac{1}{2N}\E((Y+Y')^\top(Y+Y'))}\overset{**}{=}
\sqrt{\tau(\Sigma_{Y-Y'})}\sqrt{\tau(\Sigma_{Y+Y'})}
\end{gather*}
where (*) and (**) follows from the fact that one can take trace (or normalized trace) into expectation and vice versa, and moreover from the fact that $\text{tr}(AB)=\text{tr}(BA)$ for any two matrices that their multiplication is well defined. The inequalities are the result of the application of Cauchy-Schwartz inequality for covariances of random variables. First we show that $\sqrt{\tau(\Sigma_{Y+Y'})}$ is bounded as a function of $N$. Define $\{h_t^{(j)}\}$ as follows
\begin{gather*}
h^{(j)}_t=
\begin{cases}
2 h_t & \text{if\ } -N\leq t+j\leq N-1\\
h_t & \text{otherwise}
\end{cases}.
\end{gather*}
We can bound each element of diagonal of $\Sigma_{Y+Y'}$ as follows
\begin{gather*}
[\Sigma_{Y+Y'}]_{jj}=\E\Bigl[(Y_j+Y_j')^2\Bigr]=\E\Bigl[(\sum_{l=-\infty}^{\infty}X_{j-l}h_l^{(j)})^2\Bigr]\leq
\E\Bigl[(\sum_{l=-\infty}^{\infty}|X_{j-l}||h_l^{(j)}|)^2\Bigr]\leq\\
4\E\Bigl[(\sum_{l=-\infty}^{\infty}|X_{j-l}||h_l|)^2\Bigr]=4C_{Y}(0),
\end{gather*}
and therefore $\tau(\Sigma_{Y+Y'})$ is bounded. \\

Now we show that each element of diagonal of $\Sigma_{Y-Y'}$ tends to zero when $N$ tends to infinity which will complete the proof. With overload of notation, in this case define $\{h_t^{(j)}\}$ as follows
\begin{gather*}
h^{(j)}_t=
\begin{cases}
0 & \text{if\ } -N\leq t+j\leq N-1\\
h_t & \text{otherwise.}
\end{cases}
\end{gather*}
Then for the $j$-th element of diagonal of $\Sigma_{Y-Y'}$ we have
\begin{gather*}
[\Sigma_{Y-Y'}]_{jj}=\E\Bigl[(Y_j-Y_j')^2\Bigr]=
\E\Bigl[(\sum_{l=-\infty}^{\infty}X_{j-l}h_l^{(j)})^2\Bigr]=\E\Bigl[(\sum_{\substack{l\geq N-j \\l< -N-j}}X_{j-l}h_l)^2\Bigr]
\end{gather*}
Since autocorrelation function attains its maximum at $t=0$ and 
\begin{gather*}
\forall i,j\in\mathbb{Z},\ \ \ \  \E(X_i X_j)\leq\sqrt{\E(X_i^2)\E(X_j^2)}
\end{gather*}
we get:
\begin{gather*}
\forall i,j\in\mathbb{Z},\ \ \ \  \E(X_i X_j)\leq \E(X_0^2).
\end{gather*}
As a result we have:
\begin{gather*}
[\Sigma_{Y-Y'}]_{jj}=\E\Bigl[(\sum_{\substack{l\geq N-j \\l< -N-j}}X_{j-l}h_l)^2\Bigr]\leq
\sum_{\substack{l,l'\geq N-j \\l,l'< -N-j}}\E(X_0^2)h_lh_{l'}=
\E(X_0^2)\sum_{\substack{l,l'\geq N-j \\l,l'< -N-j}}h_l h_{l'}\leq\\
\E(X_0^2)(\sum_{\substack{l\geq N-j \\l< -N-j}}h_l)^2\leq\E(X_0^2)(\sum_{\substack{l\geq N-j \\l< -N-j}}|h_l|)^2
\end{gather*}
Now since $\{h_t\}$ is absolutely convergent, it follows that $[\Sigma_{Y-Y'}]_{jj}$ can be arbitrarily reduced by increasing $N$. Then it follows that $\tau(\Sigma_{Y-Y'})$ approaches to zero when $N$ tends to infinity.
}
\end{proof}
Finally to complete the proof of the theorem regarding the asymptotic behaviour of trace condition in the truncated linear systems and the equivalence of trace condition (see postulate \cref{pos:tc}) to SIC, we need one of the convergence theorems due to Szeg\"o:
\begin{thm}[Szeg\"o's convergence theorem]\label{thm:szego}\cite{gray2006toeplitz} Let $f:[-\frac{1}{2},\frac{1}{2})\to \mathbb{R}$ $f\in L^1$ be a bounded function and suppose $t_k$'s are its Fourier series coefficients, i.e.
\begin{gather*}
t_k=\int_{-\frac{1}{2}}^{\frac{1}{2}}f(\nu)e^{\textbf{i} 2\pi k\nu}d\nu,\ \ \ \  t\in \mathbb{Z}.
\end{gather*}
Consider Toeplitz matrices $T_n$ defined as 
$$[T_n]_{ij}=t_{i-j} \ \ \ \  i,j\in\{0,...,n-1\}$$
with eigenvalues $\tau_{n,k} (0\leq k\leq  n-1)$. Then if $T_n$'s are Hermitian, i.e. $t_i=\bar{t_i}$ for any $i$, then for any continuous function $F$ we have:
\begin{gather*}
\lim\limits_{n\to \infty} \frac{1}{n}\sum\limits_{k=0}^{n-1}F(\tau_{n,k})=\int_{-\frac{1}{2}}^{\frac{1}{2}}F(f(\nu))d\nu
\end{gather*}
\end{thm}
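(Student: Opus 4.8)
The plan is to prove the identity first for polynomial $F$ via the method of moments, and then to extend to an arbitrary continuous $F$ by Weierstrass approximation, using the uniform eigenvalue bounds already established in \Cref{lem:szego}. Since $f$ is real-valued, its Fourier coefficients satisfy $t_{-k}=\overline{t_k}$, so each $T_n$ is Hermitian; hence for any polynomial $F$ we have $\frac{1}{n}\sum_{k=0}^{n-1}F(\tau_{n,k})=\frac{1}{n}\tr F(T_n)$, and it suffices to determine the limit of the normalized power sums $M_s(n):=\frac{1}{n}\tr(T_n^s)$ for every integer $s\ge 1$.

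First I would compute $\lim_{n\to\infty}M_s(n)$. Expanding the trace and writing $k_r=i_r-i_{r+1}$ (cyclically, with $i_{s+1}=i_1$) gives
\[
\tr(T_n^s)=\sum_{i_1,\dots,i_s=0}^{n-1} t_{i_1-i_2}\,t_{i_2-i_3}\cdots t_{i_s-i_1}=\sum_{k_1+\cdots+k_s=0}\; t_{k_1}\cdots t_{k_s}\cdot\#\{\text{realizing index tuples}\}.
\]
For a fixed shift vector $(k_1,\dots,k_s)$ with bounded entries, once $i_1$ is chosen the remaining indices are determined, and the constraint that all of them stay in $\{0,\dots,n-1\}$ excludes only an $O(1)$ range of $i_1$; hence the realizing count divided by $n$ tends to $1$. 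Therefore
\[
\lim_{n\to\infty}M_s(n)=\sum_{k_1+\cdots+k_s=0} t_{k_1}t_{k_2}\cdots t_{k_s}=\int_{-1/2}^{1/2} f(\nu)^s\,d\nu,
\]
the last equality holding because the Fourier coefficients of $f^s$ are the $s$-fold convolution of the $t_k$, so its zeroth coefficient is exactly this multi-index sum. This proves the theorem for $F(x)=x^s$, and by linearity for every polynomial $F$.

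Next I would remove the polynomial restriction. By \Cref{lem:szego} every eigenvalue $\tau_{n,k}$ lies in the compact interval $J:=[\inf_\nu f(\nu),\,\sup_\nu f(\nu)]$, uniformly in $n$ and $k$; crucially $J$ is bounded precisely because $f$ is assumed bounded. Given a continuous $F$ and $\varepsilon>0$, Weierstrass approximation supplies a polynomial $P$ with $\sup_{x\in J}|F(x)-P(x)|<\varepsilon$. Then
\[
\Bigl|\tfrac{1}{n}\textstyle\sum_{k}F(\tau_{n,k})-\tfrac{1}{n}\sum_{k}P(\tau_{n,k})\Bigr|\le \sup_{x\in J}|F(x)-P(x)|<\varepsilon
\]
uniformly in $n$, and likewise $\bigl|\int_{-1/2}^{1/2}\bigl(F(f(\nu))-P(f(\nu))\bigr)d\nu\bigr|<\varepsilon$ since $f$ takes values in $J$. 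Combining these two estimates with the polynomial limit already proven and letting $\varepsilon\to 0$ yields the claimed convergence for $F$.

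The main obstacle is the boundary-term bookkeeping inside the moment computation: I must show both that the fraction of index tuples whose intermediate partial sums leave $\{0,\dots,n-1\}$ is negligible after dividing by $n$, and that the interchange of the limit with the (in general only $\ell^2$-summable) multi-index sum is legitimate. The cleanest way to make this rigorous is to compare $T_n$ with the circulant matrix $C_n$ generated by the same symbol, whose eigenvalues are the explicit samples $f(j/n)$, and to verify $\frac{1}{n}\|T_n^s-C_n^s\|_{\mathrm{HS}}^2\to 0$; the boundedness of $f$ controls $\|T_n\|$ and $\|C_n\|$ (again via \Cref{lem:szego}), which simultaneously forces every moment difference to vanish and furnishes the uniform spectral bound needed in the Weierstrass step above.
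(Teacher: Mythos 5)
First, a point of context: the paper does not prove this statement at all --- it is quoted from Gray's monograph \cite{gray2006toeplitz} as a known result --- so your attempt can only be judged on its own merits. Your architecture (moment method for polynomial $F$, then Weierstrass approximation using the uniform eigenvalue bounds) is exactly the classical proof, and in the regime in which the paper actually invokes the theorem it is complete: there the symbol is a power spectral density with absolutely summable autocovariance, so $\sum_k|t_k|<\infty$ and the multi-index sum $\sum_{k_1+\cdots+k_s=0}t_{k_1}\cdots t_{k_s}$ converges absolutely; dominated convergence then legitimizes passing the boundary-count factor $\#\{\text{realizing tuples}\}/n\to 1$ inside the sum, and your Weierstrass step is correct as written. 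One small repair: \Cref{lem:szego} is stated in the paper under absolute summability of the $t_k$, but the bound $\inf f\le \tau_{n,k}\le \sup f$ you need holds for any bounded symbol directly from the quadratic form $x^{*}T_nx=\int_{\mathcal I} f(\nu)\,\bigl|\sum_j x_j e^{-2\pi \mathbf{i} j\nu}\bigr|^2 d\nu$, so your appeal to that lemma in the merely-bounded case should be replaced by this one-line computation.

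The genuine gap is in your proposed repair at the stated generality, where $f$ is only bounded and integrable. The circulant $C_n$ ``whose eigenvalues are the explicit samples $f(j/n)$'' is not determined by the matrices $T_n$: altering $f$ on a null set changes every sample but no Fourier coefficient $t_k$, so no estimate of the form $\frac1n\|T_n^s-C_n^s\|_{\mathrm{HS}}^2\to 0$ can hold for all representatives of $f$; and even for a fixed representative, the conclusion you need from $C_n$, namely $\frac1n\sum_j F(f(j/n))\to\int F(f(\nu))\,d\nu$, requires Riemann integrability, which a bounded measurable $f$ need not possess. Relatedly, for merely $\ell^2$ coefficients the $s$-fold convolution sum converges only iteratively (each pairwise convolution absolutely, by Cauchy--Schwarz), not absolutely as a multi-index sum, so your identification of the limiting moment with $\int f^s$ also cannot rest on the naive interchange. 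Two standard fixes close the gap: either take $C_n$ to be the circulant whose eigenvalues are the Fej\'er (Ces\`aro) means $(\sigma_n f)(j/n)$, which are determined by the $t_k$ and for which the asymptotic-equivalence machinery goes through; or dispense with circulants and compare $T_n(f)^s$ with $T_n(f^s)$, using that $\frac1n\,{\rm tr}\,T_n(f^s)=\widehat{f^s}(0)=\int f^s$ exactly, while $\frac1n\bigl|{\rm tr}(T_n(f)^s)-{\rm tr}(T_n(f^s))\bigr|\to 0$ for bounded symbols (the Avram--Parter argument). With either substitution, and with the uniform spectral bound justified as above, your proof is correct in the full stated generality.
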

We are ready to state our convergence theorem:
\begin{thm}\label{thm:limit}
For a given truncated linear time series,  $r_{\textbf{X}'_N\to\textbf{Y}'_N}$ asymptotically approaches to the spectral values of time series on infinite domain. As a result the spectral density based estimator coincides with the trace based estimator in the limit, and more precisely
\begin{gather*}
\lim\limits_{N\to \infty} \tau(\Sigma_{\mathbf{x_N}})=\int\limits_{-\frac{1}{2}}^{\frac{1}{2}}S_{xx}(\nu)d\nu,\indent
\lim\limits_{N\to \infty} \tau(\Sigma_{\mathbf{y_N}})=\int\limits_{-\frac{1}{2}}^{\frac{1}{2}}S_{yy}(\nu)d\nu,\\
{\rm and}\ \ \ \ \ \lim\limits_{N\to \infty} T_N=\int\limits_{-\frac{1}{2}}^{\frac{1}{2}}|\hat{h}(\nu)|^2d\nu,
\end{gather*}
where $T_N$ is defined as in (\cref{eq:partial-toeplitz}). And eventually:
\begin{gather*}
\lim\limits_{n\to \infty} r_{\textbf{X}'_N\to\textbf{Y}'_N}=\rho_{\textup{\textbf{X}}\to\textup{\textbf{Y}}}\ \ \ \ 
\lim\limits_{n\to \infty} r_{\textbf{Y}'_N\to\textbf{X}'_N}=\rho_{\textup{\textbf{Y}}\to\textup{\textbf{X}}}
\end{gather*}
\end{thm}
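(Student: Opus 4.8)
The plan is to reduce the convergence $r_{\mathbf{X}'_N\to\mathbf{Y}'_N}\to\rho_{\mathbf{X}\to\mathbf{Y}}$ to three separate scalar limits and then recombine them by the algebra of limits. First I would put the target SDR into a form matching the three factors of the TDR. Using \cref{eq:ampli} we have $S_{yy}/S_{xx}=|\widehat{h}|^2$, hence $\langle S_{yy}/S_{xx}\rangle=\int_{\mathcal I}|\widehat{h}(\nu)|^2\,d\nu$, so that
\[
\rho_{\mathbf{X}\to\mathbf{Y}}=\frac{\int_{\mathcal I} S_{yy}(\nu)\,d\nu}{\left(\int_{\mathcal I} S_{xx}(\nu)\,d\nu\right)\left(\int_{\mathcal I}|\widehat{h}(\nu)|^2\,d\nu\right)}\,.
\]
Since $r_{\mathbf{X}'_N\to\mathbf{Y}'_N}=\tau(\Sigma_{\mathbf{y}_N})\big/\bigl(\tau(\Sigma_{\mathbf{x}_N})\,T_N\bigr)$, it suffices to establish the three displayed limits: $\tau(\Sigma_{\mathbf{x}_N})\to\int S_{xx}$, $\tau(\Sigma_{\mathbf{y}_N})\to\int S_{yy}$, and $T_N\to\int|\widehat{h}|^2$.

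The input and output trace limits both come from Szeg\"o's convergence theorem (\cref{thm:szego}) applied with the continuous function $F(t)=t$, for which $\frac{1}{n}\sum_k F(\tau_{n,k})$ is precisely the normalized trace. The matrix $\Sigma_{\mathbf{x}_N}=\Sigma_{X_{-N:N-1}}$ is genuinely Toeplitz with entries $C_{xx}(i-j)$ and symbol $S_{xx}$, which is bounded with absolutely summable Fourier coefficients by assumption; hence $\tau(\Sigma_{\mathbf{x}_N})\to\int S_{xx}$. For the output the situation is subtler, because the truncated covariance $\Sigma_{\mathbf{y}_N}=\Sigma_{Y_{-N:N-1}'}$ is \emph{not} Toeplitz, so Szeg\"o cannot be applied to it directly. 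The remedy is to pass through the covariance $\Sigma_{Y_{-N:N-1}}$ of the genuine stationary output, which \emph{is} Toeplitz with symbol $S_{yy}$ (bounded and with summable autocovariance by the Proposition). Szeg\"o then gives $\tau(\Sigma_{Y_{-N:N-1}})\to\int S_{yy}$, and \cref{lem:windowed-cov} closes the gap by showing $|\tau(\Sigma_{Y_{-N:N-1}})-\tau(\Sigma_{Y_{-N:N-1}'})|\to 0$, whence $\tau(\Sigma_{\mathbf{y}_N})\to\int S_{yy}$ as well.

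The third limit is exactly \cref{lem:convseries}, which gives $T_N\to\|\mathbf{h}\|_2^2$, and Parseval's identity rewrites $\|\mathbf{h}\|_2^2=\int_{\mathcal I}|\widehat{h}(\nu)|^2\,d\nu$. With the three limits in hand I would conclude by the algebra of limits: the denominator limit $\bigl(\int S_{xx}\bigr)\bigl(\int|\widehat{h}|^2\bigr)$ is strictly positive (the input has positive power and the filter is non-trivial), so the limit of the quotient equals the quotient of the limits, yielding $r_{\mathbf{X}'_N\to\mathbf{Y}'_N}\to\rho_{\mathbf{X}\to\mathbf{Y}}$. The backward statement $r_{\mathbf{Y}'_N\to\mathbf{X}'_N}\to\rho_{\mathbf{Y}\to\mathbf{X}}$ follows from the identical argument with the roles of $\mathbf{X}$ and $\mathbf{Y}$ exchanged and the backward filter $1/\widehat{h}$ in place of $\widehat{h}$; here the energy limit produces $\int|1/\widehat{h}|^2=\langle S_{xx}/S_{yy}\rangle$, matching \cref{eq:delta_infty2}, and requires the analogous finiteness hypothesis on the backward impulse response.

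The main obstacle is precisely the output trace limit. Unlike $\Sigma_{\mathbf{x}_N}$, the truncated output covariance $\Sigma_{\mathbf{y}_N}$ is not Toeplitz, so Szeg\"o's theorem is unavailable for it and one must instead control the discrepancy between the truncated output $Y'$ and the true stationary output $Y$. This is exactly the content of \cref{lem:windowed-cov}, whose proof rests on a Cauchy--Schwarz bound relating $\tau(\Sigma_{Y-Y'})$ and $\tau(\Sigma_{Y+Y'})$ together with the absolute summability of the impulse response; that estimate, rather than any algebraic manipulation, is the technical heart of the theorem.
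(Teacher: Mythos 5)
Your proof is correct and follows essentially the same route as the paper's: Szeg\"o's theorem (\cref{thm:szego}) with $F(t)=t$ for the input trace, \cref{lem:windowed-cov} to transfer that limit to the truncated output covariance, and \cref{lem:convseries} together with Parseval/Plancherel for $T_N$, followed by the algebra of limits. If anything you are more careful than the paper, which loosely asserts that both $\Sigma_{\mathbf{x}_N}$ and $\Sigma_{\mathbf{y}_N}$ are Hermitian Toeplitz before invoking the same lemma to patch exactly the non-Toeplitz gap you identify for the output.
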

\begin{proof}
{
Both $\Sigma_{\bf{x}_N}$ and $\Sigma_{\bf{y}_N}$ are hermitian Toeplitz matrices and based on  theorem \cref{thm:szego} where $F$ has been chosen as identity function and also applying lemma \cref{lem:windowed-cov} we get:
\begin{gather}
\lim\limits_{N\to \infty} \tau(\Sigma_{\mathbf{x_N}})=\int\limits_{-\frac{1}{2}}^{\frac{1}{2}}S_{xx}(\nu)d\nu\\
\lim\limits_{N\to \infty} \tau(\Sigma_{\mathbf{y_N}})=\int\limits_{-\frac{1}{2}}^{\frac{1}{2}}S_{yy}(\nu)d\nu
\end{gather}
Moreover by Plancherel's theorem and lemma \cref{lem:convseries} it follows that:
\begin{gather}
\lim\limits_{N\to \infty} T_N= \|\textbf{h}\|_2^2=\int\limits_{-\frac{1}{2}}^{\frac{1}{2}}|\hat{h}(\nu)|^2d\nu
\end{gather}
}
\end{proof}
This theorem therefore shows that the trace ratios calculated for windowed version of time series are nothing but estimates of the spectral ratios and therefore justifies that these two different methods for causal inference are indeed consistent with each other.

\end{document}